\UseRawInputEncoding
\documentclass[10pt,journal,compsoc]{IEEEtran}

\ifCLASSOPTIONcompsoc
  \usepackage[nocompress]{cite}
\else
  \usepackage{cite}
\fi

\ifCLASSINFOpdf
\else
\fi
\hyphenation{op-tical net-works semi-conduc-tor}
\usepackage{graphicx}
\usepackage{amsmath,amsfonts,amssymb,amsthm} 
\usepackage{algorithm}
\usepackage{algorithmic}
\usepackage{subfigure}
\usepackage{booktabs}
\renewcommand{\Re}{\mathbb{R}}
\newcommand{\Id}{\mathtt{I}}

\newcommand{\diag}[1]{\ensuremath{\mathrm{diag}\left(#1\right)}}

\newcommand{\rank}[1]{\ensuremath{\mathrm{rank}\left(#1\right)}}
\newcommand{\norm}[1]{\ensuremath{\left\| #1 \right\|}}

\newtheorem{prob}{Problem}[section]
\newtheorem{theo}{Theorem}[section]
\newtheorem{defn}{Definition}[section]
\newtheorem{lemm}{Lemma}[section]


\begin{document}

\title{Time Series Forecasting via Learning Convolutionally Low-Rank Models}

\author{Guangcan Liu,~\IEEEmembership{Senior Member,~IEEE}
\IEEEcompsocitemizethanks{\IEEEcompsocthanksitem G. Liu is with the School of Automation, Southeast University, Nanjing, China 210018. Email: gcliu1982@gmail.com.}
\thanks{Copyright (c) 2017 IEEE. Personal use of this material is permitted.  However, permission to use this material for any other purposes must be obtained from the IEEE by sending a request to pubs-permissions@ieee.org.}}
	
\IEEEtitleabstractindextext{
\begin{abstract}
Recently, Liu and Zhang~\cite{liu:arxiv:2019} studied the rather challenging problem of time series forecasting from the perspective of compressed sensing. They proposed a no-learning method, named Convolution Nuclear Norm Minimization (CNNM), and proved that CNNM can exactly recover the future part of a series from its observed part, provided that the series is convolutionally low-rank. While impressive, the convolutional low-rankness condition may not be satisfied whenever the series is far from being seasonal, and is in fact brittle to the presence of trends and dynamics. This paper tries to approach the issues by integrating a learnable, orthonormal transformation into CNNM, with the purpose for converting the series of involute structures into regular signals of convolutionally low-rank. We prove that the resultant model, termed Learning-Based CNNM (LbCNNM), strictly succeeds in identifying the future part of a series, as long as the transform of the series is convolutionally low-rank. To learn proper transformations that may meet the required success conditions, we devise an interpretable method based on Principal Component Pursuit (PCP). Equipped with this learning method and some elaborate data argumentation skills, LbCNNM not only can handle well the major components of time series (including trends, seasonality and dynamics), but also can make use of the forecasts provided by some other forecasting methods; this means LbCNNM can be used as a general tool for model combination. Extensive experiments on 100,452 real-world time series from Time Series Data Library (TSDL) and M4 Competition (M4) demonstrate the superior performance of LbCNNM.
\end{abstract}
\begin{IEEEkeywords}
compressed sensing, sparsity and low-rankness, dictionary learning, time series forecasting, model combination, Fourier transform, coherence.
\end{IEEEkeywords}}

\maketitle

\IEEEdisplaynontitleabstractindextext
\IEEEpeerreviewmaketitle

\section{Introduction}
\IEEEPARstart{T}{ime} series forecasting, the problem of making forecasts for future based on historical observations, has found tremendous significance in many areas, ranging from machine learning, statistics, signal processing and pattern recognition to econometrics, finance, meteorology and geology. In a typical setting, univariate time series forecasting, one is given a sequence of length $m-h$ and is asked to predict the next $h$ unseen values ($m>h$), where $h$ and $m$ are called respectively \emph{forecast horizon} and \emph{model size}. While its definition is quite brief, the problem is incredibly difficult, remaining challenging after decades of research and still demanding new scenarios~\cite{Pablo:arxiv:2020}. Very recently, Liu and Zhang~\cite{liu:arxiv:2019} studied this problem from the viewpoint of \emph{compressed sensing}~\cite{donoho:tit:2006,Candes:2008:spm}, and they suggested
to regard tensor-valued time series forecasting as a special case of \emph{tensor completion with arbitrary sampling}. In the framework of univariate series, the problem is described as follows:
\begin{prob}[Vector Completion with Arbitrary Sampling]\label{pb:seqc}
Let $\mathbf{y}\in\mathbb{R}^{m}$ be a vector that represents some time series of length $m$. Let $\Omega$ be a sampling set consisting of positive integers \textbf{arbitrarily} selected from the range between 1 and $m$, and let $\mathcal{P}_{\Omega}$ be the orthogonal projection onto the subspace of vectors supported on $\Omega$. Given $\mathcal{P}_{\Omega}(\textbf{y})$, which is $\textbf{y}$ by zeroing out the values whose indices are not included in $\Omega$, the goal is to recover $\textbf{y}$.
\end{prob}

The setup of \emph{arbitrary} sampling pattern is powerful, not only including series forecasting as a special case with $\Omega=\{1,\cdots,m-h\}$ but also bringing the benefits that the historical part of a series is allowed to be incomplete---this is useful in many situations~\cite{wu:ICINIS:2015,rnn:missing:2018}. To address Problem~\ref{pb:seqc}, Liu and Zhang~\cite{liu:arxiv:2019} proposed a convex program called Convolution Nuclear Norm Minimization (CNNM):
\begin{align}\label{eq:cnnm:exact}
&\min_{\mathbf{x}\in\mathbb{R}^m} \norm{\mathcal{A}_k(\mathbf{x})}_{*},\quad\textrm{s.t.}\quad\mathcal{P}_{\Omega}(\mathbf{x}-\mathbf{y})=0,
\end{align}
where $\mathcal{A}_k(\cdot)$ is a linear map from $\mathbb{R}^m$ to $\mathbb{R}^{m\times{}k}$ such that $\mathcal{A}_k(\mathbf{x})$ produces the \emph{convolution matrix} (see Section~\ref{sec:convmatrix}) of $\mathbf{x}$, $\|\cdot\|_*$ is the \emph{nuclear norm}~\cite{phd_2002_nuclear} of a matrix, $k=\alpha{}m$ is the kernel size used in defining $\mathcal{A}_k(\cdot)$, and $1.5h/m\leq\alpha\leq{1}$ is a parameter. It was proven in~\cite{liu:arxiv:2019} that the target $\mathbf{y}$ we want to recover is the unique minimizer to the problem in~\eqref{eq:cnnm:exact}, provided that the sampling complexity, $\rho_0 = \mathrm{card}(\Omega)/m$ ($\mathrm{card}(\cdot)$ is the cardinality of a set), satisfies a condition as follows:
\begin{align*}
\rho_0 >1-\mathcal{O}(1/r),
\end{align*}
where $r$, called \emph{convolution rank}, is simply the rank of the convolution matrix $\mathcal{A}_k(\mathbf{y})$. At the first glance, the sampling complexity of order $1-\mathcal{O}(1/r)$ is seemingly ``awkward'' and much worse than the ``beautiful'' bound of $\mathcal{O}(r(\log{m})^2/m)$ proven in many matrix completion papers such as~\cite{Chen:2015:tit}. Yet, as clarified by~\cite{liu:arxiv:2019}, $\mathcal{O}(r(\log{m})^2/m)$ is not possible while applying CNNM to forecasting, in which the sampling pattern is deterministic rather than random, and $1-\mathcal{O}(1/r)$ is indeed pretty tight---probably not optimal though. The complexity of $1-\mathcal{O}(1/r)$ is meaningful for forecasting, as it implies that the forecast horizon can be as large as $h=\mathcal{O}(m/r)$. This assertion, however, relies on a critical premise; that is, $r$ has to be smaller than $\mathcal{O}(m)$, i.e., the target $\mathbf{y}$ is \emph{convolutionally low-rank}, which may not happen whenever seasonality is absent from $\mathbf{y}$. In fact, the convolutional low-rankness condition required by CNNM may not stand the trends that are ubiquitous in real-life time series, not even the elusive dynamics such as knots and abrupt changes. We shall try to break through these limits via learning a proper representation for the target $\mathbf{y}$.

Notice that, for any vector $\mathbf{z}\in\mathbb{R}^m$, there always exists an orthonormal (i.e., column-wisely orthogonal) matrix $A\in\mathbb{R}^{q\times{}m}$ ($q\geq{}m$) such that $A\mathbf{z}$ is convolutionally low-rank, namely $\rank{\mathcal{A}_k(A\mathbf{z})}\leq2$ (the proof can be found in Section~\ref{sec:pca}). So, in order to make a series whose convolution matrix is high-rank or even full-rank to be convolutionally low-rank, it is suitable to consider the linear, orthonormal transformations from $\mathbb{R}^m$ to $\mathbb{R}^q$, resulting in the following convex program:
\begin{align}\label{eq:lbcnnm:exact}
&\min_{\mathbf{x}\in\mathbb{R}^m} \norm{\mathcal{A}_k(A\mathbf{x})}_{*},\quad\textrm{s.t.}\quad\mathcal{P}_{\Omega}(\mathbf{x}-\mathbf{y})=0,
\end{align}
where $A\in\mathbb{R}^{q\times{}m}$ that satisfies $A^TA=\Id_m$ is learnt from some training samples in advance, $\Id_m$ is the $m\times{}m$ identity matrix, the convolution matrix $\mathcal{A}_k(A\mathbf{x})$ is of size $q\times{}k$, and the kernel size can be set as $k=\beta{}q$ ($0<\beta\leq1$ is taken as a parameter). Hereafter, the above program is referred to as Learning-Based CNNM (LbCNNM). We will prove that LbCNNM exactly recovers the target $\mathbf{y}$ from its observed part $\mathcal{P}_{\Omega}(\mathbf{y})$, as long as
\begin{align*}
\rho_0 >1-\mathcal{O}(1/r),
\end{align*}
where $r=\rank{\mathcal{A}_k(A\mathbf{y})}$ is the convolution rank of the transformed signal $A\mathbf{y}$. Clearly, the key for LbCNNM to succeed in solving Problem~\ref{pb:seqc} is to learn proper transform matrix $A$ such that $\mathcal{A}_k(A\mathbf{y})$ is fairly low-rank.

To actualize the learning of $A$, first of all, we need a data matrix, denoted as $Y\in\mathbb{R}^{m\times{}n}$, that contains $n$ training samples. Usually, $Y$ is constructed based on a training sequence, $\tilde{\mathbf{y}}\in\mathbb{R}^{l}$, that is somehow relevant to the target $\mathbf{y}$. Indeed, obtaining a good $Y$ is the most critical premise for LbCNNM to succeed, and the construction procedure must be elaborately designed such that the temporal structures of time series are handled properly.\footnote{Except the procedure for constructing $Y$, LbCNNM is more like a general representation model than a specific time series forecasting method.} The approach for constructing $Y$ is not unique, and there are several ways to fulfil the task, as we will show in Section~\ref{sec:gematrix:pcp} and Section~\ref{sec:argumentation}. Yet, a firm principle should be respected is that $Y$ is better to be as low-rank as possible. At this moment, we temporarily assume that a qualified data matrix $Y$ has already been obtained.

Given some data matrix $Y\in\mathbb{R}^{m\times{}n}$, we will elucidate that the learning of $A$ that leads to convolutional low-rankness can be done by finding another orthonormal matrix $B\in\mathbb{R}^{q\times{}m}$ ($B^TB=\Id_m$) such that $BY$ is sparse. The latter is closely related to the problem of learning a possibly overcomplete dictionary $B^T\in\mathbb{R}^{m\times{}q}$ that expresses each sample as a linear combination of few dictionary atoms, i.e., the well-known \emph{sparsely-used dictionary learning}~\cite{Aharon:tsp:2006,michal:laa:2006,john:colt:2012,sun:tit:2017}. Nevertheless, there is a key difference; that is, $Y=B^TE$ with $E$ being sparse cannot really ensure that $BY$ is sparse, thereby the existing dictionary learning methods may not meet our demands. Thus we devise a method based on Principal Component Pursuit (PCP)~\cite{Candes:2009:JournalACM}. In short, our method first decomposes $Y$ into a low-rank component and a sparse component, and then obtains $B$ by solving some sparse minimization problem designed to encode both the low-rank and the sparse components in a joint fashion.

Depending on what kind of data matrix $Y$ is used to learn the transformation $A$, LbCNNM not only can handle well the major components of time series, including seasonality, trends and dynamics, but also can take advantage of the forecasts provided by some existing methods such as Average, Drift, Least Square Regression (LSR), CNNM and Exponential Smoothing (ExpS). These abilities enable the possibility for LbCNNM to cope with a wide variety of time series. Extensive experiments on 100,452 series from TSDL~\cite{TSDL:dataset} and M4~\cite{M4:ijf:2020} confirm the effectiveness of LbCNNM. To summarize, the main contributions of this paper include:
\begin{itemize}
\item[$\bullet$]We establish a novel method, termed LbCNNM, for time series forecasting. Besides its strengths in handling the trends, seasonality and dynamics, LbCNNM also owns the ability of combining multiple weak models into a strong one. As acknowledged by many studies, e.g.,~\cite{M4:ijf:2020,Pablo:arxiv:2020}, \emph{model combination} is one of the most effective strategies for improving the accuracy of time series forecasting.
\item[$\bullet$]We prove that the recovery property of CNNM, which is a fixed model, hold for its learning-based extension, LbCNNM. This may widen dramatically the application scope of convolutionally low-rank models.
\item[$\bullet$] This study reveals that the most crucial problem in overcomplete dictionary learning is probably the coherence issue---this might be of independent interest. In addition, the proposed LbCNNM is indeed a general method for sparse representation learning, and thus might have values outside the scope of time series forecasting.
\end{itemize}

The rest of this paper is organized as follows. Section~\ref{sec:note_pre} summarizes the mathematical notations used throughout the paper and introduces some background knowledge as well. Section~\ref{sec:meth_result} presents the technical details of the proposed methods. Section~\ref{sec:proofs} shows the mathematical proofs of the proposed theorems. Section~\ref{sec:exp} consists of empirical results and Section~\ref{sec:conclusion} concludes this paper.
\section{Notations and Preliminaries}\label{sec:note_pre}
\subsection{Summary of Main Notations}
Matrices and vectors are denoted as capital letters and bold lowercase letters, respectively. Single numbers are denoted by either lowercase or Greek letters. Four types of matrix norms are used frequently: the Frobenius norm $\|\cdot\|_F$ defined as the square root of the sum of the squares of the entries of a matrix, the $\ell_1$ norm $\|\cdot\|_1$ given by the sum of the absolute values of the matrix entries, the operator norm $\|\cdot\|$ defined as the largest singular value, and the nuclear norm $\|\cdot\|_*$ calculated as the sum of singular values. For a vector $\mathbf{z}$, $\|\mathbf{z}\|_1$, $\|\mathbf{z}\|_2$ and $\|\mathbf{z}\|_0$ are its $\ell_1$, $\ell_2$ and $\ell_0$ norms, respectively. Letters $U$ and $V$ are reserved respectively for the left and right singular vectors of a matrix. The orthogonal projection onto the column space is denoted by $\mathcal{P}_U$ and given by $\mathcal{P}_U(Z)=UU^TZ$, and similarly for the row space $\mathcal{P}_V(Z)=ZVV^T$. The same notation is also used to represent the images of orthogonal projections, e.g., $Z\in\mathcal{P}_U$ means that $\mathcal{P}_U(Z)=Z$. The symbol $\otimes$ is reserved for the Kronecker product, $\Id_d$ is the identity matrix with size $d\times{}d$, and $[d]$ denotes the first $d$ positive integers, i.e., $[d]=\{1,\cdots,d\}$.
\subsection{Discrete Fourier Transform}\label{sec:dft}
Discrete Fourier Transform (DFT) is a fundamental tool for data analysis. From the view of linear algebra, the DFT of a vector $\mathbf{z}\in\mathbb{R}^w$ is simply
\begin{align*}
\mathcal{F}(\mathbf{z}) = F\mathbf{z},
\end{align*}
where $F\in\mathbb{C}^{w\times{}w}$ is a complex-valued, symmetric matrix obeying $F^HF = FF^H=w\Id_w$. The transform matrix, $F$, is data-independent and can be generated solely based on $w$. Denote by $F_1\in\mathbb{R}^{w\times{}w}$ and $F_2\in\mathbb{R}^{w\times{}w}$ the real and imaginary components of $F$, respectively. It is easy to see that $F_1F_1^T+F_2F_2^T = w\Id_w$ and $F_1^TF_1+F_2^TF_2 = w\Id_w$. Take the skinny Singular Value Decomposition (SVD) of these two matrices:
\begin{align}\label{eq:dft:u1v1}
F_1=\sqrt{w}U_1V_1^T \quad\mathrm{and}\quad F_2=\sqrt{w}U_2V_2^T,
\end{align}
where $\rank{U_1}+\rank{U_2} = w$. The first projector, $U_1U_1^T$, is indeed the sum of a diagonal matrix and an anti-diagonal matrix, thus preserving sparsity; namely, $\|U_1U_1^T\mathbf{z}\|_0\leq{}2\|\mathbf{z}\|_0$, $\forall{\mathbf{z}}$. The second projector, $U_2U_2^T=\Id_w-U_1U_1^T$, also has this property. These elementary knowledge is useful for learning the transform matrix $A$ in LbCNNM. In particular, we need to construct two orthogonal matrices as follows:
\begin{align}\label{eq:dft:ufvf}
U_F = [U_1,U_2]\in\mathbb{R}^{w\times{}w}\quad\textrm{and}\quad{}V_F=[V_1,V_2]\in\mathbb{R}^{w\times{}w},
\end{align}
where $[\cdot,\cdot]$ is to concatenate two matrices together horizontally.
\subsection{Convolution Matrix}\label{sec:convmatrix}
The definition of (discrete) convolution has many variants, depending on which \emph{boundary condition} is used. The same as in~\cite{liu:arxiv:2019}, this paper considers the \emph{circular convolution}, i.e., convolution with the \emph{circulant boundary condition}~\cite{mamdouh:sip:2012}. For $\mathbf{a}=[a_1,\cdots,a_w]^T\in\mathbb{R}^w$ and $\mathbf{b}=[b_1,\cdots,b_k]^T\in\mathbb{R}^k$ ($k\leq{}w$), the procedure of convoluting them into $\mathbf{c}=\mathbf{a}\star{}\mathbf{b}\in\mathbb{R}^w$ ($\star$ is the convolution operator) is given by
\begin{align*}
c_i = \sum_{j=1}^k a_{i-j}b_{j}, \forall{}i\in[w],
\end{align*}
where $c_i$ is the $i$th entry of $\mathbf{c}$. Here, it is assumed that $a_{i-j} = a_{i-j+w}$ for $i\leq{}j$; this is why the operator is circular. In general, circular convolution is a linear operator and can be therefore expressed as
\begin{align*}
\mathbf{a}\star{}\mathbf{b} = \mathcal{A}_k(\mathbf{a})\mathbf{b},
\end{align*}
where $\mathcal{A}_k(\mathbf{a})\in\mathbb{R}^{w\times{}k}$ is the \emph{convolution matrix} of $\mathbf{a}$ with respect to kernel size $k$. In fact, the convolution matrix is a truncated version of the circular matrix:
\begin{align}\label{eq:convmtx}
\mathcal{A}_k(\mathbf{a}) &= \left[\begin{array}{cccc}
a_1 & a_w&\cdots&a_{w-k+2}\\
a_2 & a_1&\cdots&a_{w-k+3}\\
\vdots&\vdots&\vdots&\vdots\\
a_w & a_{w-1}&\cdots&a_{w-k+1}
\end{array}\right]\\\nonumber
&=[\mathbf{a},T\mathbf{a},\cdots,T^{k-1}\mathbf{a}],
\end{align}
where $T\in\mathbb{R}^{w\times{}w}$ is a permutation matrix such that $TZ$ circularly shifts all the rows of $Z$ by one position, $\forall{Z}\in\mathbb{R}^{w\times{}w_1}$, $w_1\geq1$. Whenever $k=w$, the produced convolution matrix, $\mathcal{A}_w(\mathbf{a})\in\mathbb{R}^{w\times{}w}$, is diagonalized by DFT~\cite{MonC2008}. In this case, the so-called convolution nuclear norm falls back to the $\ell_1$ norm of the Fourier transform. That is, $\|\mathcal{A}_w(\mathbf{a})\|_* = \|\mathcal{F}(\mathbf{a})\|_1$, $\forall{}\mathbf{a}\in\mathbb{R}^w$, where $\mathcal{F}$ is the DFT operator.

Within the bounds of time series, there are certain reasons for the convolution matrix to be low-rank or approximately so. For a periodic series $\mathbf{z}\in\mathbb{R}^m$ with period $p$, i.e., $z_i = z_{i+p},\forall{}i$, its convolution matrix $\mathcal{A}_k(\mathbf{z})\in\mathbb{R}^{m\times{}k}$ obeys
\begin{align*}
\rank{\mathcal{A}_k(\mathbf{z})}\leq{}p, \textrm{ provided that } m=cp,
 \end{align*}
where $c$ is some positive integer. Note here that the condition of $m=cp$ is necessary in general cases, and thus the model size $m$ in CNNM is actually a hyper-parameter needs be estimated carefully. Besides periodicity, the local continuity of a series can also lead to convolutional low-rankness in an approximate manner, as in this case the convolution matrix may contain many near-duplicate columns and rows. Hence, CNNM, which is a fixed prediction function, owns certain ability of making forecasts. However, since periodicity has been assumed inherently in circular convolution, the performance of CNNM may drop dramatically whenever the series is very different from being seasonal. For this reason, CNNM is not applicable to many types of time series, e.g., a straight line with large slope---hinting that the trends exist widely in time series may violate the convolutional low-rankness condition required by CNNM.
\subsection{Generation Matrix}\label{sec:gematrix:pcp}
To learn a proper transform matrix $A$ such that $A\mathbf{y}$ is convolutionally low-rank, in general, it is necessary to obtain some training samples that are somehow relevant to the target $\mathbf{y}$. The quality of training data, as aforementioned, is crucial for LbCNNM to succeed. In this work, we adopt a straightforward approach as follows. Consider the standard forecasting setup that is to predict the future values of a series based on its historical part. Let's re-denote the historical part as $\tilde{\mathbf{y}}=[\tilde{y}_1,\cdots,\tilde{y}_l]\in\mathbb{R}^{l}$ and treat $\tilde{\mathbf{y}}$ as a training sequence. Then we can set $m\leq{}l$ (so the model size $m$ is a hyper-parameter in this setup), form the observed part of $\mathbf{y}$ by taking the last $m-h$ observations from $\tilde{\mathbf{y}}$, and construct a matrix $G_0\in\mathbb{R}^{m\times{}n_0}$ ($n_0=l-m+1$) as
\begin{align}\label{eq:gemtx}
G_0 = \left[\begin{array}{cccc}
\tilde{y}_1 & \tilde{y}_{2}&\cdots&\tilde{y}_{l-m+1}\\
\tilde{y}_2 & \tilde{y}_{3}&\cdots&\tilde{y}_{l-m+2}\\
\vdots&\vdots&\vdots&\vdots\\
\tilde{y}_m & \tilde{y}_{m+1}&\cdots&\tilde{y}_{l}
\end{array}\right],
\end{align}
which is called the \emph{generation matrix}, each column of which is a training sample. Since the columns of $G_0$ and the target $\mathbf{y}$ are extracted from the same series with the same temporal order, it is reasonable to believe that they may have good chance to be statistically relevant.

As one can see, the generation matrix has a structure very similar to the convolution matrix. But there are still some notable differences. First of all, for a periodic series $\tilde{\mathbf{y}}$ with period $p$, the rank of its generation matrix is always bounded from above by $p$:
\begin{align*}
\rank{G_0}\leq{}p,\forall{}m,
\end{align*}
which means the generation matrix is rank-deficient as long as $\min(m,n_0)>p$. For any linear series, $\tilde{y}_i = ai+b, i\in[l],\forall{}a,b$, it is easy to see that its generation matrix has a rank of at most 2. As a consequence, provided that $\tilde{\mathbf{y}}$ can be decomposed into the sum of $c_1$ periodic signals (with period at most $p$) and $c_2$ ($c_2\leq2$) lines, we have
\begin{align*}
\rank{G_0}\leq{}pc_1 + 2c_2, \forall{}m.
\end{align*}
Whenever the premise holds only in an approximate sense; namely, $\tilde{\mathbf{y}}=\bar{\mathbf{y}} + \mathbf{n}$ with $\|\mathbf{n}\|_2\leq{}\epsilon$ and $\bar{\mathbf{y}}$ obeying the condition,
we also have
\begin{align*}
\|G_0 - \bar{G}_0\|_F\leq\sqrt{\min(m,n_0)}\epsilon\textrm{  with  }\rank{\bar{G}_0}\leq{}pc_1 + 2c_2,
\end{align*}
where $\bar{G}_0$ is the generation matrix extracted from $\bar{\mathbf{y}}$.

The above deductions indicate that the seasonality and trends, which are two major components of time series, generally lead to low-rankness. Another important component of time series is the dynamics, including kinks, knots, sudden changes, and so no. Such elusive structures may not necessarily lead to low-rankness, and representation learning is unlikely to be able to \emph{fully} convert them into low-rankness either.\footnote{For any kinds of dynamics, as aforemention in Introduction, there does exist a transformation $A$ that can transform them into something of convolutionally low-rank. However, the existence of $A$ does not necessarily mean that it is practical to learn the desired $A$ in any cases.} Surprisingly, using the learning method proposed in Section~\ref{sec:pcp}, certain kind of dynamics that have trackable structures could be represented in a low-rank fashion.

In most elucidation of this paper, the generation matrix $G_0$ will be used to construct the data matrix $Y$ that is needed by the learning of the transform matrix $A$. However, it is particularly worth mentioning that $G_0$ is never the unique choice, and there are indeed a great many blueprints for designing $Y$, as we will exemplify in Section~\ref{sec:argumentation}.
\subsection{Robust Measures of Low-Rankness} The property of low-rankness is a reliable clue for prejudging the performance of LbCNNM, and we often need to measure how close a matrix $Z$ is to be low-rank. To this end, the most naive approach is to firstly obtain the vector of all (zero and non-zero) singular values of $Z$, denoted as $\mathbf{s}$, and then calculate the $\ell_0$ norm of $\mathbf{s}$. Such a naive approach, however, seldom works in practice, as very often the singular value vector $\mathbf{s}$ is heavily-tailed rather than strictly sparse. Hence, we consider instead another two robust measures of sparsity: entropy and Gini~\cite{gini:tit:2009}. The entropy of the spectrum of $Z$, called \emph{spectral entropy} and denoted as $\mathrm{SpEnt}(Z)$, is calculated as follows:
\begin{align}\label{eq:ent}
\mathrm{SpEnt}(Z) = -\frac{\sum_{i=1}^{n_b}p_i\log_2{p_i}}{\log_2{n_b}}\in[0, 1],
\end{align}
where the values in $\mathbf{s}$ are partitioned into $n_b$ bins (we set $n_b=5$), and $p_i$ is the probability for the values in $\mathbf{s}$ to fall into the $i$th bin. The smaller the spectral entropy is, the more close the matrix $Z$ is to be low-rank. The \emph{spectral Gini} of $Z$, denoted as $\mathrm{SpGini}(Z)$, is simply the Gini of its singular value vector $\mathbf{s}$:
\begin{align}\label{eq:gini}
\mathrm{SpGini}(Z) = \mathrm{Gini}(\mathbf{s})\in[0, 1].
\end{align}
The detailed formula for calculating the Gini of a vector can be found in~\cite{gini:tit:2009}. Unlike the spectral entropy, which is an increasing function of rank, the spectral Gini is larger means that the matrix is more close to be low-rank.
\subsection{Coherence}\label{sec:conv:coherence}
\noindent\textbf{Standard Coherence:} The concept of \emph{coherence}~\cite{Candes:2009:math} has been being widely used for theoretical analysis. Indeed, coherence is not just a tool for making proofs, but instead touches some fundamental aspects of learning, e.g., the intrinsic structures of data~\cite{liu:tpami:2016b,liu:tsp:2016,liu:2014:nips}. For a rank-$r$ matrix $Z\in\mathbb{R}^{q\times{}k}$, let its skinny SVD be $Z=U\Sigma{}V^T$ with $\Sigma\in\mathbb{R}^{r\times{}r}$. Then there are two coherence parameters, $\mu_1$ and $\mu_2$, for characterizing some properties of $Z$:
\begin{align}\label{eq:coherence}
\mu_1(Z) = \frac{q}{r}\max_{i\in[q]}\|U^T\mathbf{e}_i\|_2^2,\quad \mu_2(Z) = \frac{k}{r}\max_{j\in[k]}\|V^T\tilde{\mathbf{e}}_j\|_2^2,
\end{align}
where $\mathbf{e}_i$ and $\tilde{\mathbf{e}}_j$ are the $i$th and $j$th standard bases of $\mathbb{R}^q$ and $\mathbb{R}^k$, respectively. By definition, $1\leq\mu_1(Z)\leq{}q$ and $1\leq\mu_2(Z)\leq{}k$. Here, the lower bound is attained if $Z$ is a nonzero constant matrix (whose elements are the same), and the upper bounds are achieved when $Z$ has only one nonzero entry.\\

\noindent\textbf{Convolution Coherence:} The so-called \emph{convolution coherence}~\cite{liu:arxiv:2019}, denoted as $\hat{\mu}(\cdot)$, of a vector is simply the coherence of its convolution matrix:
\begin{align}\label{eq:convcoherence}
\hat{\mu}_1(\mathbf{z}) = \mu_1(\mathcal{A}_k(\mathbf{z})),\quad\hat{\mu}_2(\mathbf{z}) = \mu_2(\mathcal{A}_k(\mathbf{z})), \quad\forall{}\mathbf{z}\in\mathbb{R}^q.
\end{align}
The definitions in~\eqref{eq:coherence} and \eqref{eq:convcoherence} immediately lead to $1\leq\hat{\mu}_1(\mathbf{z})\leq{}q$ and $1\leq\hat{\mu}_2(\mathbf{z})\leq{}k$. The minimum convolution coherence can be achieved by many examples, e.g., $\mathbf{z}$ is a nonzero constant vector, $\mathbf{z}$ is the sine series with $z_i=\sin(2i\pi/q)$, etc. When $k=1$ (i.e., the trivial kernel), the maximum is attained if $\mathbf{z}$ is a standard basis in $\mathbb{R}^q$. In the general cases with $k>1$, however, it is hard, if not impossible, to find an example that can attain the maximum---this is indeed beneficial. To help readers understand the convolution coherence, we would like to establish some data-dependent bounds. Denote by $\nu$ the \emph{condition number} of the convolution matrix of $\mathbf{z}$, i.e., $\nu$ is the ratio of the largest singular value of $\mathcal{A}_k(\mathbf{z})$ to its smallest nonzero singular value. Based on the fact that each column or row of $\mathcal{A}_k(\mathbf{z})$ has an $\ell_2$ norm of at most $\|\mathbf{z}\|_2$, it can be proven that
\begin{align*}
1\leq{}\hat{\mu}_1(\mathbf{z})\leq{}q\nu^2/k \textrm{ and } 1\leq{}\hat{\mu}_2(\mathbf{z})\leq{}\nu^2,
\end{align*}
where the two upper bounds are both attained if $\mathbf{z}$ is a standard basis of $\mathbb{R}^q$, $\forall{}k\in[q]$.
\begin{proof}Let the skinny SVD of $\mathcal{A}_k(\mathbf{z})$ be $\mathcal{A}_k(\mathbf{z})=U\Sigma{}V^T$, where $\Sigma=\diag{\sigma_1,\cdots,\sigma_r}$, and $\sigma_1\geq\cdots\geq\sigma_r>0$ are the nonzero singular values of $\mathcal{A}_k(\mathbf{z})$. Considering the $\ell_2$ norm of the $j$th column of $\mathcal{A}_k(\mathbf{z})$, we have
\begin{align*}
\|\mathbf{z}\|_2^2=\|U\Sigma{}V^T\tilde{\mathbf{e}}_j\|_2^2=\|\Sigma{}V^T\tilde{\mathbf{e}}_j\|_2^2\geq{}\sigma_r^2\|V^T\tilde{\mathbf{e}}_j\|_2^2,
\end{align*}
which gives that
\begin{align*}
\|V^T\tilde{\mathbf{e}}_j\|_2^2\leq\frac{\|\mathbf{z}\|_2^2}{\sigma_r^2}=\frac{\sum_{i=1}^r\sigma_i^2}{k\sigma_r^2}\leq\frac{r}{k}\nu^2,
\end{align*}
from which it follows that $\hat{\mu}_2(\mathbf{z})\leq{}\nu^2$. Similarly, it can be proven that $\hat{\mu}_1(\mathbf{z})\leq{}q\nu^2/k$.
\end{proof}

\noindent\textbf{Generalized Convolution Coherence:} The standard coherence in~\eqref{eq:coherence} is specific to the standard bases. To figure out the success conditions of LbCNNM, it is necessary to access the concept of \emph{generalized coherence}~\cite{gross:tit:2011} with respect to any bases. Notice that, for $Z\in\mathbb{R}^{q\times{}k}$ with skinny SVD $Z=U\Sigma{}V^T$, $\mathcal{P}_U$ is indeed an orthogonal projection onto a subspace of vectors in $\mathbb{R}^{qk}$---consider the vectorized form of $\mathcal{P}_U$ with the corresponding projector being $\Id_k\otimes{}UU^T$. Then it is easy to see that the coherence defined in~\eqref{eq:coherence} is to measure the $\ell_2$ norms of the projections onto $\Id_k\otimes{}UU^T$ of the bases in
\begin{align*}
\Id_k\otimes{}\Id_q = \left[\begin{array}{ccc}
\Id_q & &\\
&\ddots&\\
& &\Id_q
\end{array}\right]\in\mathbb{R}^{qk\times{}qk},
\end{align*}
in which each column is a standard basis of $\mathbb{R}^{qk}$. Now, turn to the model of LbCNNM in~\eqref{eq:lbcnnm:exact}. In consideration of the circular convolution with kernel size $k$, the orthonormal transformation $A\in\mathbb{R}^{q\times{}m}$ gives a basis matrix as follows (note that the convolution matrix of $A\mathbf{z}$ is $[A\mathbf{z},TA\mathbf{z},\cdots,T^{k-1}A\mathbf{z}]$, $\forall{}\mathbf{z}$):
\begin{align}\label{eq:covbases}
\left[\begin{array}{cccc}
\tilde{A} & & &\\
&T\tilde{A}&&\\
&&\ddots&\\
& &&T^{k-1}\tilde{A}
\end{array}\right]\in\mathbb{R}^{qk\times{}qk},
\end{align}
where $\tilde{A}=[A, A^\bot]\in\mathbb{R}^{q\times{}q}$ is an orthogonal matrix formed by concatenating $A$ and its orthogonal complement $A^\bot$ together horizontally, and $T\in\mathbb{R}^{q\times{}q}$ is the permutation matrix defined in~\eqref{eq:convmtx}. It is worth noting that, the columns of $A$ are \emph{incomplete} as the bases for $\mathbb{R}^q$---though its rows are \emph{overcomplete} for $\mathbb{R}^m$, thereby it is necessary to do expansion using $A^\bot$. With these notations, the first \emph{generalized convolution coherence} is defined as follows.
\begin{defn}[Generalized Convolution Coherence]\label{defn:geconvcoherence}Let $\mathcal{A}_k(A\mathbf{z})\in\mathbb{R}^{q\times{}k}$ be the rank-$r$ convolution matrix of the transformed signal $A\mathbf{z}\in\mathbb{R}^{q}$ of a vector $\mathbf{z}\in\mathbb{R}^{m}$, where $A\in\mathbb{R}^{q\times{}m}$ obeys $A^TA=\Id_m$. Let the skinny SVD of $\mathcal{A}_k(A\mathbf{z})$ be $U\Sigma{}V^T$, where $\Sigma\in\mathbb{R}^{r\times{}r}$. Then the first generalized convolution coherence of $\mathbf{z}$, denoted as $\mu_A(\mathbf{z})$, is defined as
\begin{align*}
\mu_A(\mathbf{z}) = \frac{q}{r}\max_{i\in[m],j\in[k]}\|U^TT^{j-1}A\mathbf{e}_i\|_2^2,
\end{align*}
where $\mathbf{e}_i$ is the $i$th standard basis of $\mathbb{R}^m$, and $T$ is a permutation matrix defined in the same way as in~\eqref{eq:convmtx}.
\end{defn}
As one may have noticed, the generalized convolution coherence is nothing more than a quantity that measures the $\ell_2$ norms of the projections onto $\Id_k\otimes{}UU^T$ of the bases in~\eqref{eq:covbases}, where $U$ is the basis matrix of the column space of $\mathcal{A}_k(A\mathbf{z})$. Though the unseen bases in $A^\bot$ are not used directly, $\mu_A$ still has to depend on $A^\bot$. Whenever $A=\Id_m$, the generalized convolution coherence coincides with the convolution coherence. That is, $\mu_{\Id_m}(\mathbf{z})=\hat{\mu}_1(\mathbf{z})=\mu_1(\mathcal{A}_k(\mathbf{z})), \forall\mathbf{z}$. To examine the ranges of $\mu_A(\mathbf{z})$, we temporarily turn to the following quantity that is slightly different from $\mu_A(\mathbf{z})$:
\begin{align*}
\tilde{\mu}_A(\mathbf{z}) = \frac{q}{r}\max_{i\in[q],j\in[k]}\|U^TT^{j-1}\tilde{A}\mathbf{e}_i\|_2^2,
\end{align*}
where $\tilde{A}=[A, A^\bot]\in\mathbb{R}^{q\times{}q}$. Then it is easy to see that $1\leq\tilde{\mu}_A(\mathbf{z})\leq{}q$, and we will show that the lower and upper bounds are both attainable. Consider the case where $q=m$ (so $\mu_A(\mathbf{z})=\tilde{\mu}_A(\mathbf{z})$) and $\mathbf{c}$ is a $q$-dimensional nonzero constant vector. Let the SVD of $\mathbf{c}$ be $U_{\mathbf{c}}\Sigma_{\mathbf{c}}V_{\mathbf{c}}^T$ with $U_{\mathbf{c}}\in\mathbb{R}^{q\times{}q}$. Let $U_F\in\mathbb{R}^{q\times{}q}$ and $V_F\in\mathbb{R}^{q\times{}q}$ be constructed as in~\eqref{eq:dft:ufvf}. In this way, the following can be verified analytically: If $\mathbf{z}=\mathbf{c}$ and $A=V_FU_F^TU_{\mathbf{z}}^T$ then $\mu_A(\mathbf{z})=\tilde{\mu}_A(\mathbf{z})=1$; if $A=V_FU_F^T$ and $\mathbf{z}=A^T\mathbf{c}$ then $\mu_A(\mathbf{z})=\tilde{\mu}_A(\mathbf{z})=q$.\footnote{Here, the conclusions follow from the fact that $\mathcal{A}_{k}(A\mathbf{z})$ is a constant matrix and the first column and row of $V_FU_F^T$ are constant vectors.} In the general cases where $q\geq{}m$, since $\mu_A(\mathbf{z})\leq\tilde{\mu}_A(\mathbf{z})$, it is not impossible for $\mu_A(\mathbf{z})$ to fall below 1; this would be nice if possible. Up to present, we tend to believe that $\mu_A(\mathbf{z})$ is a positive quantity bounded from above by $q$. Note that, for any $\mathbf{z}\neq0$, it is impossible for $\mu_A(\mathbf{z})$ to be $0$, as $U^TA=0$ gives $\mathbf{z}=0$.\\

\noindent\textbf{Convolutional Basis Coherence:} Consider the situation where we have already obtained some transformation $A=[\mathbf{a}_1,\cdots,\mathbf{a}_m]\in\mathbb{R}^{q\times{}m}$ such that $A\mathbf{y}$ is convolutionally low-rank. In this case, for the LbCNNM program~\eqref{eq:lbcnnm:exact} to be able to identify the target $\mathbf{y}$, it is indeed necessary that the bases $\{\mathbf{a}_i\}_{i=1}^m$ themselves are \emph{not} convolutionally low-rank. Otherwise, $\mathbf{y}$ may be unidentifiable. Without loss of generality, assume that $\mathbf{a}_{j_1}$ is convolutionally low-rank, i.e., $\mathcal{A}_k(\mathbf{a}_{j_1})$ is low-rank, and suppose that the $j_1$th entry of $\mathbf{y}$ is missing. Denote by $\mathbf{e}_{j_1}$ the $j_1$th standard basis of $\mathbb{R}^m$. Construct another vector $\mathbf{y}'=\mathbf{y}+b\mathbf{e}_{j_1}, \forall{b}\neq0$. Then we have $\mathcal{P}_{\Omega}(\mathbf{y}')=\mathcal{P}_{\Omega}(\mathbf{y})$ and
\begin{align*}
\mathrm{rank}(\mathcal{A}_k(A\mathbf{y}'))\leq\mathrm{rank}(\mathcal{A}_k(A\mathbf{y}))+\mathrm{rank}(\mathcal{A}_k(\mathbf{a}_{j_1})),
\end{align*}
which means $A\mathbf{y}'$ can also be convolutionally low-rank and thus $\mathbf{y}$ may not be recoverable. To prevent such unidentifiable cases, we define a quantity as follows:
\begin{defn}[Convolutional Basis Coherence]\label{defn:basiscoherence} Let $\mathbf{a}\in\mathbb{R}^q$ be a basis vector obeying $\|\mathbf{a}\|_2=1$. Then the convolutional basis coherence of $\mathbf{a}$, denoted by $\bar{\mu}(\mathbf{a})$, is defined as
\begin{align*}
\bar{\mu}(\mathbf{a}) = \sum_{j=1}^q|\mathbf{a}^TT^{j-1}\mathbf{a}|,
\end{align*}
where $T\in\mathbb{R}^{q\times{}q}$ is the permutation matrix used in Definition \ref{defn:geconvcoherence} and $|\cdot|$ denotes the absolute value of a number.
\end{defn}

The smallest value $\bar{\mu}(\mathbf{a})$ can be is 1, achieved if $\mathbf{a}$ is a standard basis in $\mathbb{R}^q$. The maximum of $\bar{\mu}(\mathbf{a})$ is $q$, attained when all the entries in $\mathbf{a}$ have an equal value of $1/\sqrt{q}$. Notice, that the convolution matrix of $\mathbf{a}$ is $\mathcal{A}_k(\mathbf{a})=[\mathbf{a},T\mathbf{a},\cdots,T^{k-1}\mathbf{a}]$. Then it may be easily seen that the cosine similarity between the $i$th and $j$th column vectors of $\mathcal{A}_k(\mathbf{a})$ is given by
\begin{align*}
|(T^{i-1}\mathbf{a})^T(T^{j-1}\mathbf{a})|=|\mathbf{a}^TT^{j-i}\mathbf{a}|, \forall{}i,j\in[k],\forall{}k\in[q],
\end{align*}
where it is worth noting that, due to the peculiarities of circular convolution,  $T^{-i}=T^{q-i}$, $\forall{}i\in[q]$. Hence, in fact, $\bar{\mu}(\mathbf{a})$ measures the angles between the column vectors of the convolution matrix of $\mathbf{a}$, and $\bar{\mu}(\mathbf{a})$ may be large if $\mathcal{A}_k(\mathbf{a})$ is strictly or close to be low-rank.

Now, consider the transform matrix $A$ that contains multiple bases. Following the above justifications, it is natural to define the following quantity called \emph{convolutional transformation coherence}:
\begin{align}\label{eq:convtranscoh}
\bar{\mu}(A) = \max_{i\in[m]}\bar{\mu}(\mathbf{a}_i),
\end{align}
where $\mathbf{a}_i$ is the $i$th column vector of $A\in\mathbb{R}^{q\times{}m}$. The above definition is made specific to the setup of \emph{arbitrary} sampling pattern in Problem~\ref{pb:seqc}. If one just wants to consider the standard setup of time series forecasting, then the coherence in~\eqref{eq:convtranscoh} can be replaced by the following:
\begin{align*}
\tilde{\mu}(A) = \max_{m-h+1\leq{}i\leq{}m}\bar{\mu}(\mathbf{a}_i),
\end{align*}
where $h$ is the forecast horizon. In general, $\bar{\mu}(A)$ (resp. $\tilde{\mu}(A)$) ranges from 1 to $q$, where the minimum can be attained by $A=[\Id_m;0]\in\mathbb{R}^{q\times{}m}$, and the maximum is achieved if a constant basis appears in $A$ (resp. the last $h$ columns of $A$).
\section{Analyses and Learning Methods}\label{sec:meth_result}
In this section, we shall figure out under which conditions LbCNNM can succeed in recovering the target $\mathbf{y}$ at first, then building suitable learning methods with the attempt to meet those conditions.
\subsection{Recovery Conditions of LbCNNM}
The following theorem establishes sufficient conditions for the LbCNNM program~\eqref{eq:lbcnnm:exact} to exactly recover the target $\mathbf{y}$ from its observed part $\mathcal{P}_{\Omega}(\mathbf{y})$.
\begin{theo}[Noiseless]\label{main:thm:noiseless}
Denote by $r$ the rank of the convolution matrix $\mathcal{A}_k(A\mathbf{y})$, denote by $\hat{\mu}_2(A\mathbf{y})$ the second convolution coherence defined as in~\eqref{eq:convcoherence}, denote by $\mu_A(\mathbf{y})$ the first generalized convolution coherence of $\mathbf{y}$ defined in Definition~\ref{defn:geconvcoherence}. Denote by $\bar{\mu}(A)$ the convolutional transformation coherence of $A$ defined in \eqref{eq:convtranscoh}. Let $\mu=\max(\hat{\mu}_2(A\mathbf{y})\bar{\mu}(A),\mu_A(\mathbf{y}))$. Then $\mathbf{x}=\mathbf{y}$ is the unique minimizer to the problem in~\eqref{eq:lbcnnm:exact}, as long as
\begin{align*}
\rho_0=\frac{\mathrm{card}(\Omega)}{m} > 1 - \frac{0.25k}{\mu{}mr},
\end{align*}
where $k=\beta{q}$ ($0<\beta\leq1$) is the kernel size adopted by LbCNNM.
\end{theo}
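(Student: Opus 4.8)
Write $\Phi(\mathbf{x})=\mathcal{A}_k(A\mathbf{x})$, regarded as a linear map $\mathbb{R}^m\to\mathbb{R}^{q\times k}$, and let $\bar\Omega=[m]\setminus\Omega$, so that the feasible set of~\eqref{eq:lbcnnm:exact} is $\{\mathbf{y}+\mathbf{h}:\mathbf{h}\in\mathcal{P}_{\bar\Omega}\}$. Two structural facts drive everything. First, since the columns of a convolution matrix are circular shifts of its generator, $\mathcal{A}_k^{*}\mathcal{A}_k=k\Id_q$, and together with $A^TA=\Id_m$ this gives $\Phi^{*}\Phi=k\Id_m$; in particular $\Phi$ is injective and $\norm{\Phi(\mathbf{h})}_F^2=k\norm{\mathbf{h}}_2^2$. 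Second, letting $U\Sigma V^T$ be the skinny SVD of $M:=\Phi(\mathbf{y})=\mathcal{A}_k(A\mathbf{y})$ (rank $r$), $\mathcal{T}=\{UX^T+YV^T\}$ the tangent space and $\mathcal{P}_{\mathcal{T}}=\mathcal{P}_U+\mathcal{P}_V-\mathcal{P}_U\mathcal{P}_V$ its orthogonal projector, the subdifferential of $\mathbf{x}\mapsto\norm{\Phi(\mathbf{x})}_*$ at $\mathbf{y}$ equals $\{\Phi^{*}(UV^T+W_\perp):W_\perp\in\mathcal{T}^{\perp},\,\norm{W_\perp}\le1\}$. The plan is the standard dual-certificate route: produce a matrix $W$ with $\mathcal{P}_{\mathcal{T}}(W)=UV^T$, $\norm{\mathcal{P}_{\mathcal{T}^\perp}(W)}<1$, and $\Phi^{*}(W)$ supported on $\Omega$. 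Then for any feasible $\mathbf{h}\in\mathcal{P}_{\bar\Omega}$, $\mathbf{h}\neq0$, combining the subgradient inequality (with the best choice of $W_\perp$) with $\langle W,\Phi(\mathbf{h})\rangle=\langle\Phi^{*}(W),\mathbf{h}\rangle=0$ (disjoint supports) yields $\norm{\Phi(\mathbf{y}+\mathbf{h})}_*\ge\norm{\Phi(\mathbf{y})}_*+(1-\norm{\mathcal{P}_{\mathcal{T}^\perp}(W)})\norm{\mathcal{P}_{\mathcal{T}^\perp}\Phi(\mathbf{h})}_*$, which is strict once $\mathcal{P}_{\mathcal{T}^\perp}\Phi(\mathbf{h})\neq0$; this last fact is supplied below. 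Hence $\mathbf{y}$ is the unique minimizer.

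Everything thus reduces to controlling a single scalar, $\theta:=k^{-1}\norm{\mathcal{P}_{\mathcal{T}}\Phi\mathcal{P}_{\bar\Omega}}^2$. If $\theta<1$, then for $\mathbf{h}\in\mathcal{P}_{\bar\Omega}$, $\norm{\mathcal{P}_{\mathcal{T}^\perp}\Phi(\mathbf{h})}_F^2=k\norm{\mathbf{h}}_2^2-\norm{\mathcal{P}_{\mathcal{T}}\Phi(\mathbf{h})}_F^2\ge k(1-\theta)\norm{\mathbf{h}}_2^2>0$ whenever $\mathbf{h}\neq0$---this is the missing fact above. Moreover the self-adjoint operator $\mathcal{G}:=k\mathcal{P}_{\bar\Omega}-\mathcal{P}_{\bar\Omega}\Phi^{*}\mathcal{P}_{\mathcal{T}}\Phi\mathcal{P}_{\bar\Omega}$ is bounded below by $k(1-\theta)$ on $\mathcal{P}_{\bar\Omega}$, hence invertible there with $\norm{\mathcal{G}^{-1}}\le[k(1-\theta)]^{-1}$, so one may take the least-squares certificate $W=UV^T-\mathcal{P}_{\mathcal{T}^\perp}\Phi\mathcal{P}_{\bar\Omega}\,\mathcal{G}^{-1}\,\mathcal{P}_{\bar\Omega}\Phi^{*}(UV^T)$; using $\Phi^{*}\mathcal{P}_{\mathcal{T}^\perp}\Phi=k\Id_m-\Phi^{*}\mathcal{P}_{\mathcal{T}}\Phi$, a direct computation shows $\mathcal{P}_{\mathcal{T}}(W)=UV^T$ and $\mathcal{P}_{\bar\Omega}\Phi^{*}(W)=0$ automatically. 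The only inequality left is $\norm{\mathcal{P}_{\mathcal{T}^\perp}(W)}<1$, i.e.\ an \emph{operator}-norm (not merely Frobenius) bound on $\mathcal{P}_{\mathcal{T}^\perp}\Phi\mathcal{P}_{\bar\Omega}\mathcal{G}^{-1}\mathcal{P}_{\bar\Omega}\Phi^{*}(UV^T)$.

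The two coherence estimates that finish the proof are as follows. To bound $\theta$, use $\norm{\mathcal{P}_{\mathcal{T}}\Phi\mathcal{P}_{\bar\Omega}}^2\le\sum_{i\in\bar\Omega}\norm{\mathcal{P}_{\mathcal{T}}\mathcal{A}_k(A\mathbf{e}_i)}_F^2$ and split $\mathcal{P}_{\mathcal{T}}=\mathcal{P}_U+\mathcal{P}_V-\mathcal{P}_U\mathcal{P}_V$. The column-space part is controlled by Definition~\ref{defn:geconvcoherence}: since the $j$-th column of $\mathcal{A}_k(A\mathbf{e}_i)$ is $T^{j-1}A\mathbf{e}_i$, $\norm{\mathcal{P}_U\mathcal{A}_k(A\mathbf{e}_i)}_F^2=\sum_{j=1}^k\norm{U^TT^{j-1}A\mathbf{e}_i}_2^2\le k\mu_A(\mathbf{y})r/q$. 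The row-space part uses that each row of $\mathcal{A}_k(A\mathbf{e}_i)$ is a length-$k$ window of the $i$-th column of $A$---whose entries are $\le\sqrt{\bar\mu(A)/q}$ in magnitude by definition of $\bar\mu(A)$---together with $\norm{V^T\tilde{\mathbf{e}}_j}_2^2\le\hat\mu_2(A\mathbf{y})r/k$; this is exactly where the \emph{product} $\hat\mu_2(A\mathbf{y})\bar\mu(A)$ is forced, $\bar\mu(A)$ being the quantity that converts the native coordinates $\mathbf{e}_i\in\mathbb{R}^m$ picked out by $\bar\Omega$ into the coordinates of $\mathbb{R}^q$ seen by the convolution. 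Summing over $\bar\Omega$ gives $\theta\le C_0\,\mathrm{card}(\bar\Omega)\,\mu\,r/k$ with $\mu=\max(\hat\mu_2(A\mathbf{y})\bar\mu(A),\mu_A(\mathbf{y}))$ and $C_0$ absolute. An analogous chain---now invoking in addition the spectral norm of the truncated circulant $\mathcal{A}_k(A\mathbf{e}_i)$, the bound $\norm{\mathcal{G}^{-1}}\le[k(1-\theta)]^{-1}$, and the same coherences to control $\mathcal{P}_{\bar\Omega}\Phi^{*}(UV^T)$---bounds $\norm{\mathcal{P}_{\mathcal{T}^\perp}(W)}$ by a quantity of the same form. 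Imposing $\mathrm{card}(\bar\Omega)=(1-\rho_0)m<0.16k/(\mu r)$, equivalently $\rho_0>1-0.16k/(\mu m r)$, then makes $\theta$ small enough that both $\theta<1$ and $\norm{\mathcal{P}_{\mathcal{T}^\perp}(W)}<1$ hold, the numerical constant $0.16$ emerging from chaining these inequalities and from the optimal split of the budget between the $\mathcal{P}_U$- and $\mathcal{P}_V$-contributions. I expect the main obstacle to be precisely this last operator-norm control of the certificate: unlike the Frobenius estimates it must exploit the circulant structure of $\mathcal{A}_k(A\mathbf{e}_i)$ jointly with the geometry of $A$ (whose columns are incomplete in $\mathbb{R}^q$ while its rows are overcomplete for $\mathbb{R}^m$, so that $A^\bot$ genuinely enters, as already noted after Definition~\ref{defn:geconvcoherence}), and it must merge the three distinct coherence quantities into the single parameter $\mu$ without losing the favorable factor $k=\beta q$.
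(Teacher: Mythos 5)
Your overall architecture---convex duality, a dual certificate supported (in the adjoint sense) on $\Omega$, and the three coherence quantities $\mu_A(\mathbf{y})$, $\hat{\mu}_2(A\mathbf{y})$, $\bar{\mu}(A)$ entering exactly where you place them (the column-space part through $\mu_A$, the row-space part through the product $\hat{\mu}_2\bar{\mu}$)---matches the paper's proof in spirit. But the step you yourself flag as ``the main obstacle,'' the spectral-norm bound $\norm{\mathcal{P}_{\mathcal{T}^\perp}(W)}<1$ for your least-squares certificate, is a genuine gap, and it is precisely the difficulty the paper's construction is designed to dodge. Your certificate inverts the Gram operator $\mathcal{G}$ on the $\bar{\Omega}$ side, so the quantity to control is $\norm{\mathcal{P}_{\mathcal{T}^\perp}\Phi\mathcal{P}_{\bar\Omega}\mathcal{G}^{-1}\mathcal{P}_{\bar\Omega}\Phi^{*}(UV^T)}$ in \emph{spectral} norm; the only estimate available from your ingredients passes through the Frobenius norm and pays $\norm{UV^T}_F=\sqrt{r}$, which would degrade the sampling condition to something like $1-\mathcal{O}(1/r^2)$ rather than the claimed $1-0.16k/(\mu mr)$. (In the random-sampling literature this loss is what the golfing scheme repairs; no such tool is available here because $\Omega$ is deterministic.) The paper avoids the issue by two moves you do not make: it lifts the sampling constraint into $\mathbb{R}^{q\times k}$ as a bona fide orthogonal projection $\mathcal{P}_{\Theta}(Z)=\sum_{i=1}^kT^{i-1}\tilde{A}D\tilde{A}^TT^{1-i}Z\mathbf{e}_i\mathbf{e}_i^T$ (with $D$ the $0/1$ diagonal expansion of $\Omega$, padded with ones on the $A^\bot$ directions), which intertwines with $\mathcal{A}_kA$ and $\mathcal{P}_{\Omega}$; and it inverts on the \emph{tangent-space} side, taking $H=\mathcal{P}_{\Theta}\mathcal{P}_T(\mathcal{P}_T\mathcal{P}_{\Theta}\mathcal{P}_T)^{-1}(UV^T)$, so that Lemma~\ref{lem:optnorm:invpt} converts the certificate norm into the exact expression $\sqrt{1/(1-\norm{\mathcal{P}_T\mathcal{P}_{\Theta}^\bot\mathcal{P}_T})-1}$ and the entire proof collapses to the single scalar bound $\norm{\mathcal{P}_T\mathcal{P}_{\Theta}^\bot\mathcal{P}_T}<1/2$.

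A second, smaller gap is in your row-space coherence estimate. In the paper, bounding $\norm{\mathcal{P}_V\mathcal{P}_{\Theta}^\bot\mathcal{P}_V}$ is not a one-line Frobenius sum: after vectorizing the transpose, the relevant operator $Q$ is a $q\times q$ array of $k\times k$ diagonal blocks $Q_{(i,j)}$ built from inner products of rows of $\tilde{A}(\Id_q-D)$, and one must decompose the projected operator into $2q-1$ block-diagonal pieces $P_0,\tilde{P}_i,\bar{P}_i$, bound each by $0.16/q$ using $\bar{\mu}(A)\hat{\mu}_2(A\mathbf{y})$, and sum to get $0.32$; combined with the $0.16$ from the column-space side this gives $0.48<0.5$ and fixes the constant in the theorem. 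Your sketch ``each row of $\mathcal{A}_k(A\mathbf{e}_i)$ is a window of the $i$-th column of $A$, whose entries are at most $\sqrt{\bar{\mu}(A)/q}$'' is the right raw ingredient, but summing it naively over the $q$ rows and the $\mathrm{card}(\bar{\Omega})$ columns overshoots by roughly a factor of $q$; the $2q-1$-fold splitting is what recovers the correct scaling. So the proposal identifies the right quantities but does not supply the two structural devices ($\mathcal{P}_{\Theta}$ with tangent-side inversion, and the block-diagonal splitting of $Q$) that actually make the constants and the rate come out as stated.
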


When $A=\Id_m$, i.e., CNNM, the above theorem falls back to Theorem 3.1 of~\cite{liu:arxiv:2019}. To assess the optimality of the proven \emph{sampling bound}, the lower bound of the sampling complexity $\rho_0$, we consider a special category of data where the target $\mathbf{y}$ takes value 1 at the last $\tilde{h}$ ($1\leq\tilde{h}\leq{}m/2$) entries and 0 everywhere else; namely, $y_i=0$ for $1\leq{}i\leq{}m-\tilde{h}$ and $y_i=1$ for $m-\tilde{h} + 1\leq{}i\leq{}m$. For such data, whenever those $\tilde{h}$ ones are all missing, LbCNNM cannot, probably no method can, succeed in identifying $\mathbf{y}$. In other words, the optimal sampling bound is
\begin{align*}
\rho_0>1 - \tilde{h}/{m}.
\end{align*}
Set $k=q$ for simplicity. Let the SVD of $\mathbf{y}$ be $U_{\mathbf{y}}\Sigma_{\mathbf{y}}V_{\mathbf{y}}^T$, where $U_{\mathbf{y}}\in\mathbb{R}^{m\times{}m}$ is orthogonal. Construct the transform matrix as $A=V_FU_F^TB\in\mathbb{R}^{q\times{}m}$, where $B\in\mathbb{R}^{q\times{}m}$ is an orthonormal matrix obtained by padding $q-m$ zero rows to $U_{\mathbf{y}}^T$, i.e., $B=[U_{\mathbf{y}}^T;0]$, and $V_F\in\mathbb{R}^{q\times{}q}$ and $U_F\in\mathbb{R}^{q\times{}q}$ are two orthogonal matrices computed according to~\eqref{eq:dft:ufvf}. With such configurations, it can be verified that $r=1$, $\hat{\mu}_2(A\mathbf{y})=1$, $\mu_A(\mathbf{y}) = q/\tilde{h}$ and $\bar{\mu}(A)\approx2q/\tilde{h}$. Thus, roughly speaking, Theorem~\ref{main:thm:noiseless} proposes a bound as follows:\footnote{In this example, the SVD of $\mathbf{y}$ is not unique, and the value of $\bar{\mu}(A)$ essentially depends on how $U_{\mathbf{y}}$ is constructed. While using $U_{\mathbf{y}}=\left[\begin{array}{cc}
0 &\bar{V}_F\bar{U}_F^T\\
\tilde{V}_F\tilde{U}_F^T&0
\end{array}\right]$ with $\bar{V}_F,\bar{U}_F^T\in\mathbb{R}^{(m-\tilde{h})\times(m-\tilde{h})}$ and $\tilde{V}_F,\tilde{U}_F^T\in\mathbb{R}^{\tilde{h}\times{}\tilde{h}}$ being orthogonal matrices constructed according to~\eqref{eq:dft:ufvf}, we have experimentally verified that $\bar{\mu}(A)\approx2q/\tilde{h}$.}
\begin{align*}
\rho_0>1 - 0.125\tilde{h}/{m},
\end{align*}
which is nearly the same with the optimal bound. As a consequence, the sampling bound provided by Theorem~\ref{main:thm:noiseless} is indeed ``optimal'', in a sense that it is almost impossible to further reduce the bound on all kinds of data \emph{simultaneously}. Of course, it is entirely possible to obtain better bounds while taking into account only certain type of data.

In practice, the observed part is often a noisy version of $\mathcal{P}_{\Omega}(\mathbf{y})$---or the convolution matrix $\mathcal{A}_k(A\mathbf{y})$ is not strictly low-rank as equal. In this case, one should relax the equality constraint in LbCNNM, reaching the following convex program:
\begin{align}\label{eq:lbcnnm:noisy}
&\min_{\mathbf{x}\in\mathbb{R}^m} \norm{\mathcal{A}_k(A\mathbf{x})}_{*},\quad\textrm{s.t.}\quad\|\mathcal{P}_{\Omega}(\mathbf{x}-\hat{\mathbf{y}})\|_2\leq{}\epsilon,
\end{align}
where $\mathcal{P}_{\Omega}(\hat{\mathbf{y}})$ denotes an observation of $\mathcal{P}_{\Omega}(\mathbf{y})$, and $\epsilon\geq0$ is a parameter.

\begin{theo}[Noisy]\label{main:thm:noisy}
Use the notations defined in Theorem~\ref{main:thm:noiseless}. Let $\mathcal{P}_{\Omega}(\hat{\mathbf{y}})$ be an observed version of $\mathcal{P}_{\Omega}(\mathbf{y})$ that obeys $\|\mathcal{P}_{\Omega}(\hat{\mathbf{y}}-\mathbf{y})\|_2\leq{}\epsilon$. If
\begin{align*}
\rho_0=\frac{\mathrm{card}(\Omega)}{m} > 1 - \frac{0.22k}{\mu{}mr},
\end{align*}
then any optimal solution $\mathbf{x}$ to the LbCNNM program~\eqref{eq:lbcnnm:noisy} satisfies
\begin{align*}
\|\mathbf{x}-\mathbf{y}\|_2\leq{}(1+\sqrt{2})(38\sqrt{k}+2)\epsilon.
\end{align*}
\end{theo}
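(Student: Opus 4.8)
The plan is to follow the standard perturbation argument for nuclear‑norm minimization, reusing the dual certificate and the restricted‑injectivity estimate already established in the proof of Theorem~\ref{main:thm:noiseless}, but now tracking the noise level $\epsilon$ through every step. Write $\mathbf{d}=\mathbf{x}-\mathbf{y}$ for the recovery error, set $M=\mathcal{A}_k(A\mathbf{y})$ with skinny SVD $M=U\Sigma{}V^T$, let $T=\{U X^T + Y V^T : X,Y\}$ be the tangent space at $M$, write $\Delta=\mathcal{A}_k(A\mathbf{d})$, and let $\mathcal{B}\colon\mathbf{z}\mapsto\mathcal{A}_k(A\mathbf{z})$ with adjoint $\mathcal{B}^*$. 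Two elementary facts are used throughout. First, since $A^TA=\Id_m$ and each column of a convolution matrix is a cyclic shift of its generating vector, $\mathcal{B}$ is a scaled isometry, $\norm{\Delta}_F=\sqrt{k}\,\norm{\mathbf{d}}_2$. Second, because $\norm{\mathcal{P}_{\Omega}(\hat{\mathbf{y}}-\mathbf{y})}_2\le\epsilon$, the target $\mathbf{y}$ is itself feasible for~\eqref{eq:lbcnnm:noisy}, so the optimal $\mathbf{x}$ obeys $\norm{\mathcal{A}_k(A\mathbf{x})}_*\le\norm{M}_*$, and the triangle inequality gives $\norm{\mathcal{P}_{\Omega}(\mathbf{d})}_2\le 2\epsilon$.

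From the nuclear‑norm non‑increase together with the subgradient inequality $\norm{M+\Delta}_*\ge\norm{M}_*+\langle{}UV^T,\Delta\rangle+\norm{\mathcal{P}_{T^\perp}(\Delta)}_*$, I get $\norm{\mathcal{P}_{T^\perp}(\Delta)}_*\le -\langle{}UV^T,\Delta\rangle$. The next step is to turn the right‑hand side into something controlled by $\norm{\mathcal{P}_{\Omega}(\mathbf{d})}_2$ and by $\norm{\mathcal{P}_{T^\perp}(\Delta)}_*$. For this I invoke the dual certificate $W$ produced in the proof of Theorem~\ref{main:thm:noiseless}: it satisfies $W\in T^\perp$, $\norm{W}\le\tfrac12$, and $\mathcal{B}^*(UV^T+W)$ is supported on $\Omega$ with controlled $\ell_2$ norm. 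Then $\langle UV^T+W,\Delta\rangle=\langle\mathcal{B}^*(UV^T+W),\mathbf{d}\rangle=\langle\mathcal{B}^*(UV^T+W),\mathcal{P}_{\Omega}(\mathbf{d})\rangle$ is bounded by $2\epsilon$ times the certificate norm, while $\langle UV^T+W,\Delta\rangle=\langle UV^T,\Delta\rangle+\langle W,\mathcal{P}_{T^\perp}(\Delta)\rangle\ge\langle UV^T,\Delta\rangle-\tfrac12\norm{\mathcal{P}_{T^\perp}(\Delta)}_*$. Combining with the cone inequality above collapses everything into an absolute estimate $\norm{\mathcal{P}_{T^\perp}(\Delta)}_*\le c_1\sqrt{k}\,\epsilon$ for an explicit $c_1$.

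It then remains to bound $\norm{\mathcal{P}_T(\Delta)}_F$, and this is the crux. The feasibility constraint, after lifting, says that a suitable coordinate restriction of $\Delta$ has Frobenius norm $\mathcal{O}(\sqrt{k}\,\epsilon)$ — the transformation coherence $\bar{\mu}(A)$ enters precisely in translating $\norm{\mathcal{P}_{\Omega}(\mathbf{d})}_2\le 2\epsilon$ into this statement about $\Delta$ — while the previous step already controls the $T^\perp$‑component of $\Delta$. To pass from ``$\Delta$ is small on the observed lifted coordinates and lies nearly in $T$'' to ``$\Delta$ is small'', one needs the quantitative restricted‑injectivity estimate underlying the noiseless uniqueness proof: under $\rho_0>1-0.14k/(\mu m r)$, the operator that restricts to $T$ and then projects onto the unobserved lifted coordinates has operator norm bounded away from $1$. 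This is exactly the ingredient whose mere nonvanishing gives uniqueness in Theorem~\ref{main:thm:noiseless}; here it must be used with its explicit spectral gap, and the slightly stronger sampling requirement ($0.14$ in place of $0.16$) is what buys that gap. Feeding it in gives $\norm{\mathcal{P}_T(\Delta)}_F\le c_2\sqrt{k}\,\epsilon$.

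Finally I assemble the pieces: $\norm{\Delta}_F\le\norm{\mathcal{P}_T(\Delta)}_F+\norm{\mathcal{P}_{T^\perp}(\Delta)}_F\le c_2\sqrt{k}\,\epsilon+\norm{\mathcal{P}_{T^\perp}(\Delta)}_*\le(c_1+c_2)\sqrt{k}\,\epsilon$, and divide by $\sqrt{k}$ via the isometry identity to obtain $\norm{\mathbf{d}}_2\le(c_1+c_2)\epsilon$. A more careful bookkeeping — keeping the $\norm{\mathcal{P}_{\Omega}(\mathbf{d})}_2\le 2\epsilon$ term separate rather than absorbing it, and closing with $\norm{\mathbf{d}}_2\le\norm{\mathcal{P}_{\Omega}(\mathbf{d})}_2+\norm{\mathcal{P}_{\Omega^\perp}(\mathbf{d})}_2$ — yields the stated form $(1+\sqrt{2})(38\sqrt{k}+2)\epsilon$, with the $(1+\sqrt{2})$ factor coming from resolving the cone inequality $\norm{\mathcal{P}_{T^\perp}(\Delta)}_*\le\norm{\mathcal{P}_T(\Delta)}_*$ and the additive $2$ from the raw feasibility slack. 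I expect essentially all of the genuine work to sit in the third paragraph: making the dual‑certificate and restricted‑injectivity machinery of Theorem~\ref{main:thm:noiseless} quantitative, and checking that the constants compose to the claimed numbers; everything else is routine bookkeeping.
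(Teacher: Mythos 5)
Your roadmap is the same as the paper's: use feasibility of $\mathbf{y}$ to get $\|\mathcal{P}_{\Omega}(\mathbf{x}-\mathbf{y})\|_2\leq2\epsilon$, reuse the dual certificate from the noiseless proof to bound the $T^\bot$-component of the lifted error, use the quantitative spectral gap bought by tightening $0.16$ to $0.14$ to bound the $T$-component, and convert back via the isometry $\|\mathcal{A}_kA(\mathbf{c})\|_F=\sqrt{k}\|\mathbf{c}\|_2$. However, the quantitative middle of your argument --- which you yourself identify as where all the work sits --- does not close as written. The claimed bound $\|\mathcal{P}_{T^\bot}(\Delta)\|_*\leq c_1\sqrt{k}\,\epsilon$ has the wrong scaling in $k$: the feasibility constraint controls the \emph{Frobenius} norm $\|\mathcal{P}_{\Theta}(\Delta)\|_F\leq2\sqrt{k}\epsilon$ (exactly, via $\mathcal{P}_{\Theta}\mathcal{A}_kA=\mathcal{A}_kA\mathcal{P}_{\Omega}$ and the isometry --- no coherence is needed for this step, contrary to your remark about $\bar{\mu}(A)$), but pairing against the certificate costs another factor of $\sqrt{k}$, since one needs $\|\mathcal{P}_{\Theta}(\Delta)\|_*\leq\sqrt{k}\cdot\|\mathcal{P}_{\Theta}(\Delta)\|_F\leq2k\epsilon$ (equivalently, $\|\mathcal{B}^*(UV^T+W)\|_2=\mathcal{O}(k)$, not $\mathcal{O}(\sqrt{k})$, because $\|W\|_F$ can be as large as $\mathcal{O}(\sqrt{k})$ even though $\|W\|<1$). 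The paper's corresponding bound is $\|\mathcal{P}_T^\bot(C)\|_*\leq10\|\mathcal{P}_{\Theta}(H)\|\,\|\mathcal{P}_{\Theta}(C)\|_*\leq38k\epsilon$, of order $k\epsilon$. Your own assembly then produces $\|\mathbf{d}\|_2\leq(c_1+c_2)\epsilon$, a $k$-independent bound that contradicts the theorem's $(1+\sqrt{2})(38\sqrt{k}+2)\epsilon$; the jump from one to the other via ``more careful bookkeeping'' is precisely the step that is missing.

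Two further misattributions would trip you up when filling in constants. First, the $(1+\sqrt{2})$ factor does not come from a cone inequality $\|\mathcal{P}_{T^\bot}(\Delta)\|_*\leq\|\mathcal{P}_T(\Delta)\|_*$ (which is never established in this argument and is not used by the paper); it comes from the restricted-injectivity estimate $\|\mathcal{P}_T\mathcal{P}_{\Theta}^\bot\mathcal{P}_T\|<1/2$, which gives $\|\mathcal{P}_T\mathcal{P}_{\Theta}^\bot(C)\|_F\leq\sqrt{2}\,\|\mathcal{P}_{\Theta}\mathcal{P}_T^\bot\mathcal{P}_{\Theta}^\bot(C)\|_F$ after observing that $\mathcal{P}_{\Theta}\mathcal{P}_T\mathcal{P}_{\Theta}^\bot(C)=-\mathcal{P}_{\Theta}\mathcal{P}_T^\bot\mathcal{P}_{\Theta}^\bot(C)$. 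Second, the certificate does not satisfy $\|W\|\leq1/2$ under the $0.14$ condition; the paper gets $\|W\|<0.9$, and the resulting factor $1/(1-\|W\|)=10$ is exactly where the $38=10\times1.9\times2$ comes from. The $1/2$ belongs to $\|\mathcal{P}_T\mathcal{P}_{\Theta}^\bot\mathcal{P}_T\|$, not to $\|W\|$. With these corrections your argument becomes the paper's proof; without them the constants, and even the order of growth in $k$, do not come out.
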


The above theorem shows that the upper bound of the recovery error scales with $\sqrt{k}$. Similar phenomena appear in many papers, e.g.,~\cite{CandesPIEEE}. This is indeed purely the cause of applying the general inequalities $\|\mathbf{z}\|_2\leq\|\mathbf{z}\|_1\leq\sqrt{k}\|\mathbf{z}\|_2, \forall{}\mathbf{z}\in\mathbb{R}^k$, but is unlikely to be optimal because the two inequalities are both used in the proof. Among the other things, since no additional hypothesis is made in our proofs, Theorem~\ref{main:thm:noiseless} and Theorem~\ref{main:thm:noisy} are applicable to any real-valued data vectors. This, however, does not say that LbCNNM can work well in any cases. In fact, the above theorems illustrate that the key for LbCNNM to succeed is to learn proper transformation $A$ to meet the success conditions.
\subsection{Learning Methods}
Regarding the question of how to learn the transform matrix $A\in\mathbb{R}^{q\times{}m}$ that leads to convolutional low-rankness, we will first show that the task boils down to the problem of learning another orthonormal matrix $B\in\mathbb{R}^{q\times{}m}$ that produces sparsity, and then establish proper learning methods to finish the job.
\subsubsection{Pipeline for Learning the Transform Matrix $A$}\label{sec:learnA:pipe}
The fact that the convolution matrix is a truncation of the circular matrix simply leads to $\rank{\mathcal{A}_k(A\mathbf{z})}\leq\|\mathcal{F}(A\mathbf{z})\|_0$, $\forall\mathbf{z}\in\mathbb{R}^m$. As a result, the transform matrix $A$ in LbCNNM can be obtained as follows:
\begin{itemize}
\item[1)] Given a data matrix $Y=[\mathbf{y}_1,\cdots,\mathbf{y}_n]\in\mathbb{R}^{m\times{}n}$ containing $n$ training samples, which is extracted from some training sequence as in Section~\ref{sec:gematrix:pcp} and Section~\ref{sec:argumentation}, learn an orthonormal matrix $B\in\mathbb{R}^{q\times{}m}$ such that $BY$ is sparse.
\item[2)] Compute two orthogonal matrices, $U_F\in\mathbb{R}^{q\times{}q}$ and $V_F\in\mathbb{R}^{q\times{}q}$, by using $w=q$ as the input for~\eqref{eq:dft:ufvf}. Then construct $A=V_FU_F^TB$.
\end{itemize}
Let $A$ be computed as above, and suppose that the learning of $B$ is successful such that $\|B\mathbf{y}_i\|_0\leq{}s$, $\forall{}i\in[n]$. Then it follows from the properties of the Fourier transform that (see Section~\ref{sec:dft}):
\begin{align*}
\rank{\mathcal{A}_k(A\mathbf{y}_i)}\leq\|\mathcal{F}(A\mathbf{y}_i)\|_0\leq2s, \forall{}i\in[n].
\end{align*}

Among the other things, the learnt transform matrix $A$ actually owns out-of-sample generalization ability, in a sense that
\begin{align}\label{eq:generalization}
\rank{\mathcal{A}_k(A\mathbf{z})}\leq2ts, \forall{}\mathbf{z}\in\mathcal{S}_t(Y),
\end{align}
where $\mathcal{S}_t(Y)$, called \emph{generalizable space} of $Y$, is the space of vectors that can be represented as a linear combination of $t'\leq{}t$ column vectors of $Y$.\footnote{The generalizable space $\mathcal{S}_t(Y)$ is a nonlinear space that can include as a subset the union of the local subspaces spanned by $t'\leq{}t$ adjacent samples in $Y$.} So $t=1$ for the training samples, and thus it is desirable to use a large number of training samples to learn the transform matrix $B$. Moreover, the learnt model could be robust against noise: Whenever $B\mathbf{y}_i$ is sparse only in an approximate sense, the corresponding convolution matrix $\mathcal{A}_k(A\mathbf{y}_i)$ will be approximately low-rank---in this case Theorem~\ref{main:thm:noisy} will show its value.

Without imposing any restrictive assumptions, however, there is no way to ensure that the target $\mathbf{y}$, which is essentially unknown, can be contained by the generalizable space, $\mathcal{S}_t(Y)$, with a sufficiently small $t$. This is the ``notorious'' \emph{generalization issue}, which is probably the most difficult part of time series forecasting, in which the phenomenon of \emph{covariate shift}~\cite{sug:jmlr:2007} is indeed ubiquitous. In the rest of this subsection, we shall focus on how to learn the desired transformation $B$ from a given data matrix $Y$. As for the generalization issue, we will introduce some heuristics to improve the generalization performance of LbCNNM in Section~\ref{sec:argumentation}.
\subsubsection{Coherence Issue}
Given a data matrix $Y\in\mathbb{R}^{m\times{}n}$, now it is clear that the majority of learning the transformation $A$ is to find another orthonormal matrix $B\in\mathbb{R}^{q\times{}m}$ such that $BY$ is sparse. This is closely related to the well-known problem of sparsely-used dictionary learning. But there is an important difference: Whenever one has found $B$ such that $E=BY$ is sparse, then $B^T$ is surely a sparsely-used dictionary such that $Y=B^TE$ and $E$ is sparse. However, the converse is unnecessarily true, as $Y=B^TE$ with $E$ being sparse cannot guarantee that $BY$ is sparse---in fact $BY$ is often dense.
\begin{figure}[h!]
\begin{center}
\includegraphics[width=0.4\textwidth]{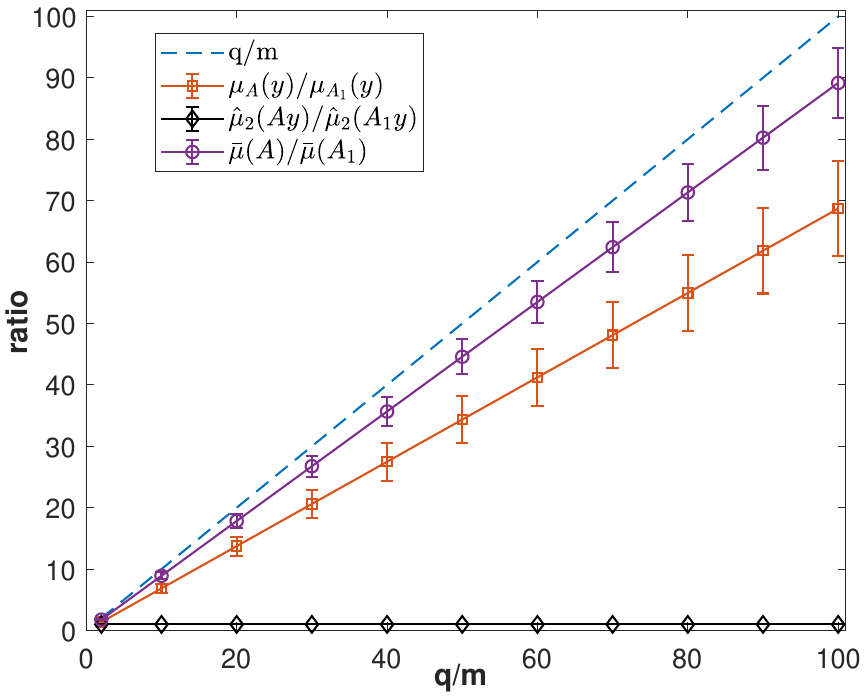}\vspace{-0.15in}
\caption{Investigating the behavior of coherence when expanding the size of $B$ by padding with zeros. We set $m=10$, $q=cm$ with $c=\{2,10,20,\cdots,100\}$, and $k=q$. The experimental data is created as follows: $B_1\in\mathbb{R}^{m\times{}m}$ is an orthogonal matrix generated at random, $\mathbf{y}=B_1^T\mathbf{z}$ where $\mathbf{z}\in\mathbb{R}^{m}$ is a randomly generated sparse vector with $\|\mathbf{z}\|_0=2$, $A_1=V_FU_F^TB_1\in\mathbb{R}^{m\times{}m}$ and $A\in\mathbb{R}^{q\times{}m}$ is formed from $[B_1;0]\in\mathbb{R}^{q\times{}m}$ in a similar way. The numbers plotted in the above figure are the means and standard deviations collected from 100 trials.}\label{fig:ratio}
\end{center}
\end{figure}

According to Theorem~\ref{main:thm:noiseless} and Theorem~\ref{main:thm:noisy}, the sampling complexity required by LbCNNM is $1 - \mathcal{O}(q/(\mu{}mr))$ (note that $k=\mathcal{O}(q)$). Thus, it seems that a ``satisfactory'' transform matrix $B$ can be obtained effortlessly. That is, first obtain somehow, for example, generate randomly, an orthogonal matrix $B_1\in\mathbb{R}^{m\times{}m}$, and then expand the size of $B_1$ to $q\times{}m$ via padding with zeros; namely, $B=[B_1;0]\in\mathbb{R}^{q\times{}m}$. This seems ``nice'' from the view point of sparsity, because one can always put $q\gg{}m$ and thus $BY$ looks sparse, in a sense that $\|B\mathbf{y}_i\|_0\leq{}m\ll{}q,\forall{}i$. Unfortunately, the convolutional transformation coherence $\bar{\mu}(A)$ may increase as $q$ goes large. This is because, whenever $q\gg{}m$, the bases in $A=V_FU_F^TB$ will be convolutionally low-rank:
\begin{align*}
\mathrm{rank}(\mathcal{A}_k(A\mathbf{e}_i))\leq2\|B\mathbf{e}_i\|_0\leq2m,
\end{align*}
where $\mathbf{e}_i$ is the $i$th standard basis of $\mathbb{R}^m$.

In addition, the generalized convolution coherence, $\mu_A$, may also increase when one adds many zero rows to $B$. To confirm, let's consider the ratio of $\mu_A$ to $\mu_{A_1}$, where $\mu_{A_1}$ is the generalized convolution coherence given by the transform matrix $A_1\in\mathbb{R}^{m\times{}m}$ formed from $B_1\in\mathbb{R}^{m\times{}m}$. For the example (in Section~\ref{sec:conv:coherence}) used for showing that $\mu_{A_1}$ can be 1, it is simple to prove that $\mu_{A}/\mu_{A_1}=q/m$. However, in general cases, the relation between $\mu_{A}/\mu_{A_1}$ and $q/m$ is much more complicated---and essentially data-dependent. Within the scope of randomly generated data, our simulation results shown in Figure~\ref{fig:ratio} suggest that $\mu_{A}/\mu_{A_1}$ may be a random variable whose mean and standard deviation are directly proportional to $q/m$, and similarly for $\bar{\mu}(A)/\bar{\mu}(A_1)$. Whenever either $\bar{\mu}(A)/\bar{\mu}(A_1)$ or $\mu_A/\mu_{A_1}$ reaches $\mathcal{O}(q/m)$, the sampling complexity required by LbCNNM becomes $1-\mathcal{O}(1/r)$, which means there are no benefits to simply add some zero rows to $B$.

Still, highly-overcomplete dictionaries (i.e., $q\gg{}m$) have good potential. But, to demonstrate their power, one needs to find a way to overcome the coherence issue, which is rather challenging. In this work, we shall focus on learning slightly-overcomplete dictionaries with $q=2m$ by Principal Component Analysis (PCA) and PCP.
\begin{algorithm}[htb]
\caption{Learning the Transform Matrix $A$ by PCA}
\label{alg:learnA:pca}
\begin{algorithmic}[1]
\STATE \textbf{input}: a data matrix $Y\in\mathbb{R}^{m\times{}n}$.
\STATE \textbf{output}: an orthonormal matrix $A\in\mathbb{R}^{2m\times{}m}$.
\STATE compute the SVD of $Y$ as $Y=U_Y\Sigma_YV_Y^T$, with $U_Y\in\mathbb{R}^{m\times{}m}$ being orthogonal.
\STATE set $B=[U_Y^T;0]\in\mathbb{R}^{2m\times{}m}$.
\STATE compute two orthogonal matrices, $U_F\in\mathbb{R}^{2m\times{}2m}$ and $V_F\in\mathbb{R}^{2m\times{}2m}$, as in~\eqref{eq:dft:ufvf}.
\STATE return $A=V_FU_F^TB$.
\end{algorithmic}
\end{algorithm}
\subsubsection{Learning the Transform Matrix $B$ Based on PCA}\label{sec:pca}
Whenever the given data matrix $Y$ is fairly low-rank,\footnote{Within the scope of univariate series, as we have explained in Section~\ref{sec:gematrix:pcp}, the trends and seasonality arguably result in low-rank generation matrices. Note here that, the presence of trends and seasonality is a sufficient but not necessary condition for low-rankness, as it is entirely possible for something else to induce low-rankness.} the required transform matrix $B$ can be found by PCA in an efficient way: $B = [U_Y^T;0]\in\mathbb{R}^{2m\times{}m}$ with $U_Y\in\mathbb{R}^{m\times{}m}$ being an orthogonal matrix consisting of all $m$ left singular vectors of $Y$. Here, the operation of padding with zeros makes only very mild difference in forecasting accuracy and is mainly for the sake of consistence.\footnote{Empirically, we did observe that $B =[U_Y^T; 0]\in\mathbb{R}^{2m\times{}m}$ gains very mild improvement over $B=U_Y^T\in\mathbb{R}^{m\times{}m}$. The reason is probably because we set $k=0.5q$, under which $k$ will increase from $0.5m$ to $m$ when $q$ is changed from $m$ to $2m$.} The computational procedure is given in Algorithm~\ref{alg:learnA:pca}. While quite simple, this algorithm is provably successful provided that $Y$ is strictly or very close to be low-rank. More precisely, let $\rank{Y}=r$ and $A$ be learnt as in Algorithm~\ref{alg:learnA:pca}, then all the samples in $Y$ will be transformed to have a convolution rank of at most $2r$:
\begin{align*}
\rank{\mathcal{A}_k(A\mathbf{y}_i)}\leq2r, \forall{}i\in[n].
\end{align*}
Consider a special case where $Y$ is a vector, i.e., an $m\times{}1$ matrix. Then the transformation $A$ learnt by Algorithm~\ref{alg:learnA:pca} guarantees that $\rank{\mathcal{A}_k(AY)}\leq2$ (in fact $\rank{\mathcal{A}_k(AY)}=1$ because $\mathcal{A}_k(AY)$ is a constant matrix), proving that any data vector can be made compliant to the convolutional low-rankness condition. In the presence of noise, namely $Y = \bar{Y}+N$ with $\|N\|_F\leq\epsilon$ and $\bar{Y}=[\bar{\mathbf{y}}_1,\cdots,\bar{\mathbf{y}}_n]$ being the best rank-$r$ approximation of $Y$, Algorithm~\ref{alg:learnA:pca} guarantees that the transformed signal $A\mathbf{y}_i$ is convolutionally low-rank in an approximate sense:
\begin{align*}
 &\sum_{i=1}^n\|\mathcal{A}_k(A\mathbf{y}_i) - \mathcal{A}_k(A\bar{\mathbf{y}}_i)\|_F^2\leq{}k^2\epsilon^2 \\
 &\textrm{ with } \rank{\mathcal{A}_k(A\bar{\mathbf{y}}_i)}\leq2r, \forall{}i\in[n],
\end{align*}
which suggests that Algorithm~\ref{alg:learnA:pca} may work well as long as $r$ and $\epsilon$ are sufficiently small, i.e., the given $Y$ is very close to be low-rank.
\begin{figure*}
\begin{center}
\includegraphics[width=0.95\textwidth]{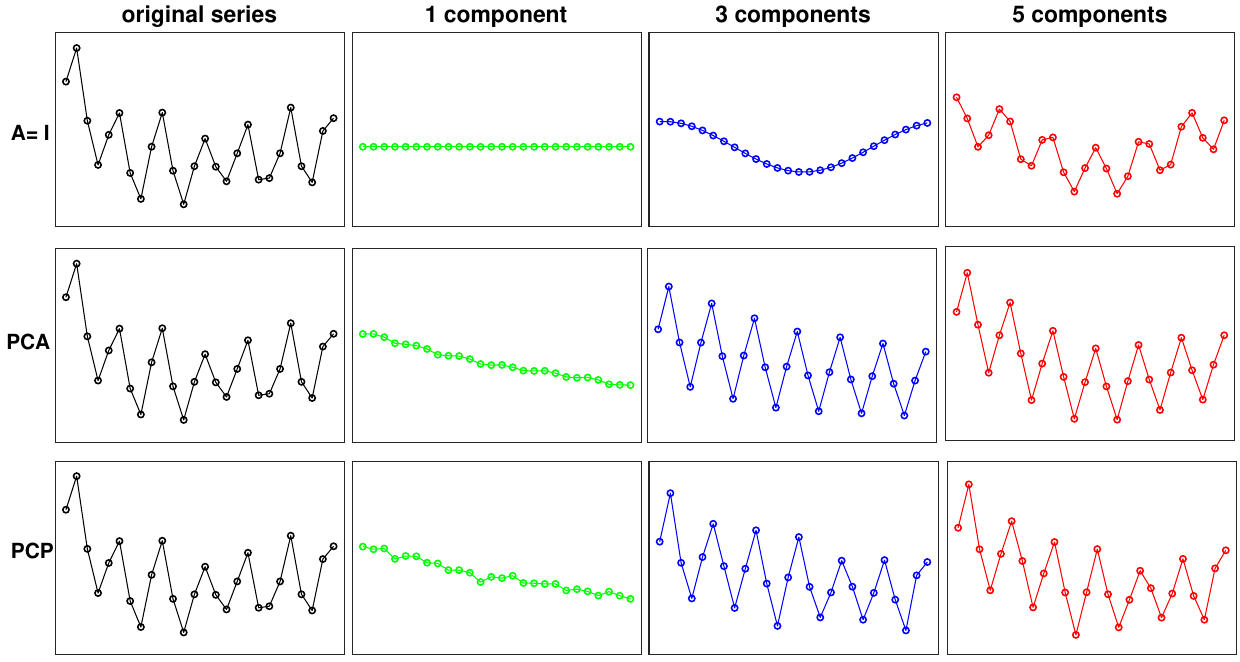}\vspace{-0.15in}
\caption{Investigating the effects of learning the transform matrix $A$. From left to right: a time series $\mathbf{z}$ with dimension $m=26$, the signal reconstructed from 1 principal component of the convolution matrix of $A\mathbf{z}$, the reconstruction from 3 principal components, and the reconstruction given by 5 principal components. In these experiments, we adopt $Y=G_0$ with $G_0$ being the generation matrix extracted from a training sequence of length 59.}\label{fig:learntA}
\end{center}
\end{figure*}

Since $A$ is an orthonormal matrix, for any series $\mathbf{z}\in\mathbb{R}^m$, its transformed signal $A\mathbf{z}$ preserves everything in $\mathbf{z}$. For this reason, Algorithm~\ref{alg:learnA:pca} is different from the other sparse representation learning methods such as Sparse Filtering~\cite{sf:nips:2011}, which aims at extracting sparse features from data in an \emph{irreversible} fashion. In our method, by contrast, the purpose of learning $A$ is not about feature exaction, but instead to reorganize the data such that the structures of interest are captured by the principal components of the convolution matrix of $A\mathbf{z}$. According to Theorem~\ref{main:thm:noisy}, the principal components dominate the forecasting results and the tail components may be treated as noise.

To see what is really encoded in the principal components (of the convolution matrix of the transformed signal), one just needs to examine the signal reconstructed as $\hat{\mathbf{z}}=A^T\mathcal{F}^{-1}(\mathcal{P}_{[r]}(\mathcal{F}(A\mathbf{z})))$, where $\mathcal{F}(\cdot)$ is the DFT operator, and $\mathcal{P}_{[r]}(\cdot)$ is an operator that preserves only $r$ largest values---in terms of magnitudes---of a vector and zeros out the others. Figure~\ref{fig:learntA} shows an example. Without representation learning, i.e., $A=\Id$, the principal components fail to reconstruct the original series, which implies that LbCNNM with $A=\Id$ (i.e., CNNM) will produce poor forecasts. By contrast, after learning a transformation by PCA, five principal components can approximately reconstruct the original series (see the second row of Figure~\ref{fig:learntA}), exhibiting the value of representation learning. However, the reconstruction is not good enough, which means the produced forecasting results will be imperfect. The reason is because the series shown in Figure~\ref{fig:learntA} contains a considerable amount of dynamics, which can ``crack'' the low-rank structures of the generation matrix $G_0$. Hence, PCA is indeed not up to the mark, and a better approach is to recover the intrinsic low-rank structure underlying $Y$ by PCP, as will be shown in the next subsection.
\subsubsection{Learning the Transform Matrix $B$ Based on PCP}\label{sec:pcp}
Whenever the observations are grossly corrupted such that $Y$ is far from being low-rank, the PCA-based Algorithm~\ref{alg:learnA:pca} may fail in pursuing convolutional low-rankness. Under the context of time series forecasting, this may happen in the presence of dynamics. In other words, while using $Y=G_0$, the low-rankness arising from seasonality and trends can be concealed by the dynamics. Fortunately, the dynamics are often rare and happen only on few observations---otherwise the problem would be insolvable, i.e., the matrix that stores dynamics is often sparse. As a consequence, it is feasible to recover the intrinsic low-rank structure underlying $Y$ by PCP~\cite{Candes:2009:JournalACM}:
\begin{align}\label{eq:pcp}
\min_{L,S}\|L\|_* + \lambda_{\mathrm{pcp}}\|S\|_1, \quad\textrm{s.t.}\quad{}Y = L + S,
\end{align}
where $\lambda_{\mathrm{pcp}}$ is a parameter. In this work, we consistently set $\lambda_{\mathrm{pcp}}=1/\sqrt{\max(m, n)}$ according to the analysis in~\cite{Candes:2009:JournalACM}.

Figure~\ref{fig:pcpts} shows an example, which illustrates that the low-rank matrix $L$ corresponds to the seasonality and trends and the sparse term $S$ is mostly consist of dynamics. Note here that the low-rank component $L$ is a mixture of seasonality and trends---and possibly includes something else as well, thus PCP is different from \emph{trend filtering}~\cite{Kin:2020:siam,Bruder11trendfiltering} and \emph{empirical mode decomposition}~\cite{huang:1998:emd}. Just for information, there are many source separation tools in the literature that did a similar job with PCP, e.g., Low-Rank Representation (LRR)~\cite{liu:tpami:2016b}, Latent LRR (LatLRR)~\cite{liu:2011:iccv}, etc. Empirically, we have found that LbCNNM works almost equally while using PCP, LRR or LatLRR (the difference is smaller than 0.05\%), thereby we suggest PCP due to its transparency in choosing the parameter $\lambda_{\mathrm{pcp}}$.
\begin{figure}[h!]
\begin{center}
\includegraphics[width=0.48\textwidth]{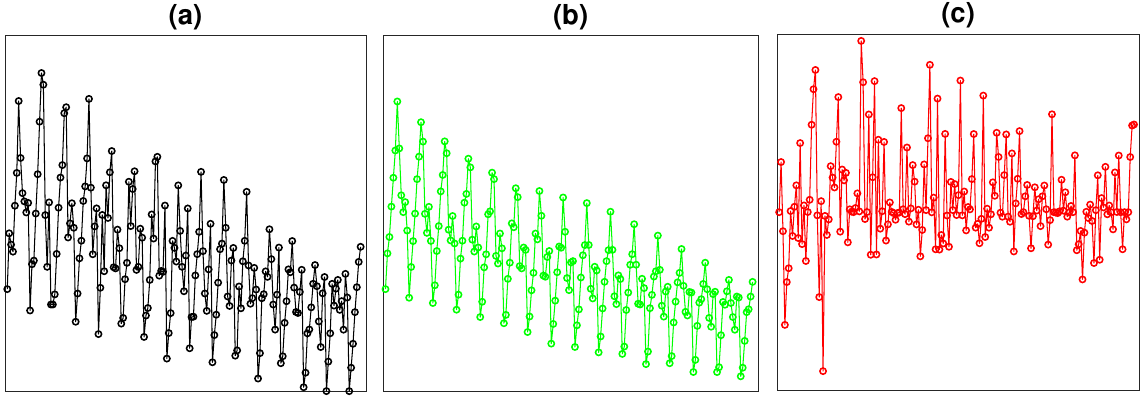}\vspace{-0.15in}
\caption{Investigating the effects of using PCP to decompose the generation matrix $G_0$ into a low-rank component $L$ plus a sparse component $S$. (a) A time series of length 187. (b) The signal reconstructed from the low-rank component $L$. (c) The signal reconstructed from $S$.}\label{fig:pcpts}
\end{center}
\end{figure}

Given that one needs only the seasonality and trends, the sparse matrix $S$ should be discarded. This is seemingly reasonable, as the dynamics are so elusive as to be barely operable. However, for LbCNNM to achieve high forecasting accuracy, it is desirable to find a way to convert the dynamics into something as convolutionally low-rank as possible. This is not ridiculous, as the dynamics, which are usually recognized as the gross components outside the scope of trends and seasonality, could have trackable patterns. Here, we would suggest a method that can handle the dynamics in an elegant way. First, we compute the SVD of $L$ as $L=U_L\Sigma_LV_L^T$, with $U_L\in\mathbb{R}^{m\times{}m}$ being orthogonal. Second, we construct a sparse matrix $E=[U_L^TL;S]\in\mathbb{R}^{2m\times{}n}$ by concatenating $U_L^TL$ and $S$ together vertically. Finally, we solve for $B$ the following $\ell_1$ minimization problem:
\begin{align}\label{eq:learnB}
\min_{B\in\mathbb{R}^{2m\times{}m}}\|BY-E\|_1,\quad\mathrm{s.t.}\quad{}B^TB=\Id_m,
\end{align}
which is a non-convex problem yet can be solved by Alternating Direction Method of Multipliers (ADMM)~\cite{admm:1976,alm:2009:lin}. The ADMM algorithm we use is standard, which minimizes the augmented Lagrangian function,
\begin{align*}
&\norm{Z}_{1}+\langle{}BY-E-Z,W\rangle+\frac{\rho}{2}\|BY-E-Z\|_F^2,
\end{align*}
with respect to $B$ and an auxiliary variable $Z\in\mathbb{R}^{2m\times{}m}$ alternately, and then updates the Lagrange multiplier $W\in\mathbb{R}^{2m\times{}m}$ as well as the penalty parameter $\rho>0$. Hereafter, $\langle\cdot,\cdot\rangle$ denotes the inner product between matrices or vectors. As acknowledged by lots of reports, \emph{ADMM always converges fast to the same global minimizer regardless of the initialization}~\cite{sun:2016:tit}. Unfortunately, while dealing with the non-convex optimization problems such as~\eqref{eq:learnB}, there is no theoretical guarantee for ADMM to converge to the critical points, not even the global minimizers. A good news is that it is not at all troublesome even if ADMM converges to a suboptimal solution, as what LbCNNM really cares is whether $BY$ is sparse enough, no matter whether the solution is globally optimal. A concrete evidence is that, if replacing the $\ell_1$ loss by $\ell_2$, the problem in~\eqref{eq:learnB} has a closed-form solution, which is exactly the solution found by ADMM at the first iteration. Since the sparsity of $BY$ mainly comes from the sparse matrix $E$ instead of the $\ell_1$ loss, $\ell_2$ also works well---we have empirically found that $\ell_1$ loss yields an improvement rate of only $0.7\%$ over $\ell_2$.
\begin{algorithm}[htb]
\caption{Learning the Transform Matrix $A$ by PCP}
\label{alg:learnA}
\begin{algorithmic}[1]
\STATE \textbf{input}:  a data matrix $Y\in\mathbb{R}^{m\times{}n}$.
\STATE \textbf{output}: an orthonormal matrix $A\in\mathbb{R}^{2m\times{}m}$.
\STATE decompose $Y$ into a low-rank term $L$ and a sparse term $S$ by PCP.
\STATE compute the SVD of $L$ as $L=U_L\Sigma_LV_L^T$ with $U\in\mathbb{R}^{m\times{}m}$ being orthogonal, and construct $E=[U_L^TL;S]$.
\STATE obtain $B\in\mathbb{R}^{2m\times{}m}$ via solving the problem in~\eqref{eq:learnB} by ADMM.
\STATE compute two orthogonal matrices, $U_F\in\mathbb{R}^{2m\times{}2m}$ and $V_F\in\mathbb{R}^{2m\times{}2m}$, as in~\eqref{eq:dft:ufvf}.
\STATE return $A=V_FU_F^TB$.
\end{algorithmic}
\end{algorithm}

Algorithm~\ref{alg:learnA} summarizes the whole procedure of the PCP-based learning algorithm. Indeed, Algorithm~\ref{alg:learnA:pca} is a special case of Algorithm~\ref{alg:learnA}, as PCP falls back to PCA whenever $\lambda_{\mathrm{pcp}}=+\infty$. As shown in Figure~\ref{fig:learntA}, Algorithm~\ref{alg:learnA} is better than Algorithm~\ref{alg:learnA:pca} in restoring the original series from few principal components, inferring that Algorithm~\ref{alg:learnA} may outperform Algorithm~\ref{alg:learnA:pca} in terms of forecasting accuracy. Interestingly, as can be seen from the bottom row of Figure~\ref{fig:learntA}, there are dynamics in the signal reconstructed from 5 or even 3 principal components (of the convolution matrix of the transformed signal). That is, certain kind of dynamics, which somehow own trackable patterns, are successfully transformed to be convolutionally low-rank and are therefore predictable. Of course, it might be impossible to represent all kinds of dynamics in a convolutionally low-rank fashion, as convolutional low-rankness implies exact recovery---it isn't so realistic to exactly predict all kinds of dynamics possibly appear in future.
\begin{figure}[h!]
\begin{center}
\includegraphics[width=0.48\textwidth]{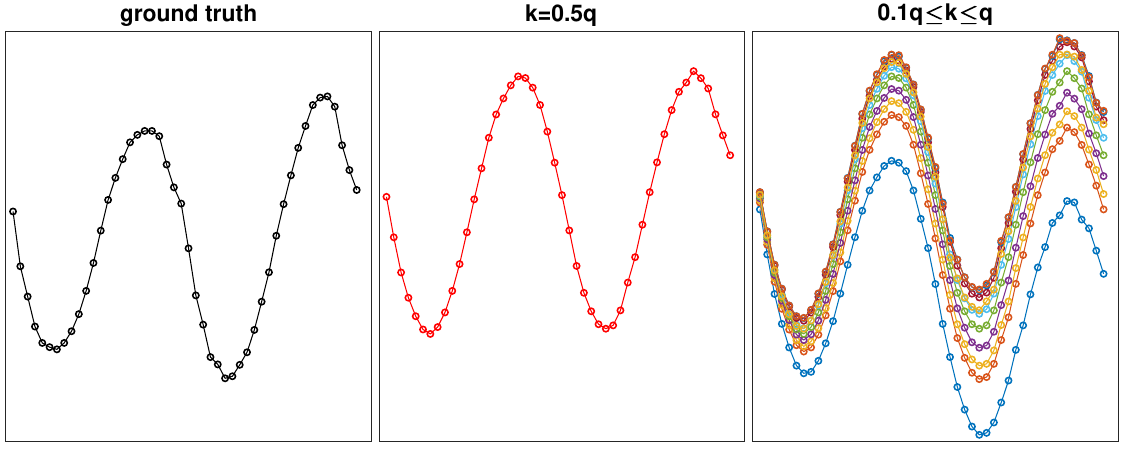}\vspace{-0.15in}
\caption{Illustrating the role of the kernel size $k$. Left: the $h=48$ future values we wish to predict. Middle: an estimate produced by LbCNNM with $k=0.5q$. Right: ten estimates produced by $k=0.1q,0.2q,\cdots,q$. In this experiment, the transform matrix $A$ is learnt by Algorithm~\ref{alg:learnA}, using $Y=G_0$. The length of the training sequence is 700, and the model size is $m=168$.}\label{fig:interval}
\end{center}
\end{figure}
\subsection{Discussions}\label{sec:discussion}
\noindent\textbf{On Interval Forecasting.} While the model in~\eqref{eq:lbcnnm:exact} is seemingly specific to point forecasting, LbCNNM actually owns the ability of providing intervals for its forecasts. The secret of success here is about the kernel size $k$. Under the background of point forecasting, we have experimentally found that $k=0.5q$ is a good choice---in fact $k=0.5q$ is near-optimal. Nevertheless, there is no unique way to determine this parameter, and the uncertainty in choosing $k$ unavoidably leads to the uncertainty of the produced forecasts. This is indeed expected, as illustrated in Figure~\ref{fig:interval}. As one can see, the forecasts produced by LbCNNM with $k=0.5q$ have considerable difference with the true values, but the intervals formed from the forecasts by multiple kernel sizes can cover the ground truth completely. Yet, to obtain tight, accurate intervals from the forecasts made by LbCNNM with multiple kernel sizes, some well-designed probabilistic models are necessary, deserving further studies.\\

\noindent\textbf{On Tensor-Valued Series.} Since high-order tensors can always be reshaped into vectors, LbCNNM has already owned the ability of dealing with tensor-valued time series. However, due to the issue of computational efficiency, we would not recommend the current LbCNNM for high-order tensors. To process an order-$b$ tensor with dimension $m_1\times\cdots{}\times{}m_b$, the model size is as large as $m=\Pi_{i=1}^bm_i$, which is often huge and may raise unaffordable computation budgets. Besides the computational issue, it is inadvisable either to treat the non-time dimensions of tensors in the same way as the time dimension. In general, while coping with high-order tensors, one should take the multi-dimensional correlations into full consideration, rather than simply vectorizing the tensors. How to extend LbCNNM to high-order tensors is also worthy of further investigations.
\section{Mathematical Proofs}\label{sec:proofs}
The following two lemmas are adapted from~\cite{liu:tpami:2019}. They are more general than their original versions in~\cite{liu:tpami:2019}, but the proofs are almost identical. Thus we omit the proof details for avoiding duplication.
\begin{lemm}[Lemma 5.6~\cite{liu:tpami:2019}]\label{lem:basic:quivalence}
Let $\mathcal{P}_1$ and $\mathcal{P}_2$ be two orthogonal projections onto some subspaces of $\Re^{a\times{}b}$, $\forall{a,b}$. Then the following are equivalent: 1) $\mathcal{P}_1\mathcal{P}_2\mathcal{P}_1$ is invertible, 2) $\|\mathcal{P}_1\mathcal{P}_2^\bot\mathcal{P}_1\|<1$, and 3) $\mathcal{P}_1\cap{}\mathcal{P}_2^\bot=\{0\}$.
\end{lemm}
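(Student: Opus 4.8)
The plan is to work entirely in the finite-dimensional real Hilbert space $\Re^{a\times b}$ with the Frobenius inner product $\langle\cdot,\cdot\rangle$, in which $\mathcal{P}_1$ and $\mathcal{P}_2$ are self-adjoint idempotents, and to read ``$\mathcal{P}_1\mathcal{P}_2\mathcal{P}_1$ is invertible'' in the natural way: the restriction of $\mathcal{P}_1\mathcal{P}_2\mathcal{P}_1$ to the subspace $\mathcal{P}_1$ is a bijection of that subspace (it already maps into $\mathcal{P}_1$ and kills $\mathcal{P}_1^\bot$). First I would dispose of the trivial case $\mathcal{P}_1=\{0\}$, in which all three statements hold vacuously, and then record the one identity that drives everything: for $x\in\mathcal{P}_1$ one has $\mathcal{P}_1\mathcal{P}_2^\bot\mathcal{P}_1x=x-\mathcal{P}_1\mathcal{P}_2\mathcal{P}_1x$, so on the subspace $\mathcal{P}_1$ the operators $\mathcal{P}_1\mathcal{P}_2\mathcal{P}_1$ and $\mathcal{P}_1\mathcal{P}_2^\bot\mathcal{P}_1$ are $\Id$ minus one another. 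Both are positive semidefinite and both are $\preceq\Id$ (since $0\preceq\mathcal{P}_2,\mathcal{P}_2^\bot\preceq\Id$), hence each restriction has all eigenvalues in $[0,1]$, and the eigenvalues of one restriction are exactly $1$ minus those of the other.

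Next I would prove $(1)\Leftrightarrow(2)$. The restriction of $\mathcal{P}_1\mathcal{P}_2\mathcal{P}_1$ to $\mathcal{P}_1$ is invertible iff $0$ is not among its eigenvalues, iff (by the identity above) $1$ is not among the eigenvalues of the restriction of $\mathcal{P}_1\mathcal{P}_2^\bot\mathcal{P}_1$ to $\mathcal{P}_1$. Because $\mathcal{P}_1\mathcal{P}_2^\bot\mathcal{P}_1$ annihilates $\mathcal{P}_1^\bot$ and maps into $\mathcal{P}_1$, its operator norm on all of $\Re^{a\times b}$ coincides with the operator norm of its restriction to $\mathcal{P}_1$, which for a positive semidefinite operator equals its largest eigenvalue, a number in $[0,1]$. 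Hence $\norm{\mathcal{P}_1\mathcal{P}_2^\bot\mathcal{P}_1}<1$ iff $1$ is not an eigenvalue, which is precisely $(1)$.

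Then I would prove $(2)\Leftrightarrow(3)$; by the last paragraph it suffices to show that $1$ is an eigenvalue of $\mathcal{P}_1\mathcal{P}_2^\bot\mathcal{P}_1$ iff $\mathcal{P}_1\cap\mathcal{P}_2^\bot\neq\{0\}$. For the easy direction, if $0\neq x\in\mathcal{P}_1\cap\mathcal{P}_2^\bot$ then $\mathcal{P}_1\mathcal{P}_2^\bot\mathcal{P}_1x=\mathcal{P}_1\mathcal{P}_2^\bot x=\mathcal{P}_1x=x$. Conversely, if $\mathcal{P}_1\mathcal{P}_2^\bot\mathcal{P}_1x=x$ with $x\neq0$, then $x$ lies in the range of $\mathcal{P}_1$, so $\mathcal{P}_1x=x$ and $\mathcal{P}_1\mathcal{P}_2^\bot x=x$; using that $\mathcal{P}_2^\bot$ is self-adjoint and idempotent, $\norm{x}^2=\langle\mathcal{P}_1\mathcal{P}_2^\bot x,x\rangle=\langle\mathcal{P}_2^\bot x,\mathcal{P}_1x\rangle=\langle\mathcal{P}_2^\bot x,x\rangle=\norm{\mathcal{P}_2^\bot x}^2$, and since $\norm{\mathcal{P}_2^\bot x}\leq\norm{x}$ with equality only if $\mathcal{P}_2^\bot x=x$, we get $x\in\mathcal{P}_2^\bot$, hence $0\neq x\in\mathcal{P}_1\cap\mathcal{P}_2^\bot$. (A clean alternative that closes a $1\Rightarrow2\Rightarrow3\Rightarrow1$ cycle is the direct implication $(3)\Rightarrow(1)$: if $\mathcal{P}_1\mathcal{P}_2\mathcal{P}_1x=0$ with $x\in\mathcal{P}_1$, then $0=\langle\mathcal{P}_1\mathcal{P}_2\mathcal{P}_1x,x\rangle=\norm{\mathcal{P}_2x}^2$, so $x\in\mathcal{P}_2^\bot\cap\mathcal{P}_1=\{0\}$, whence the restriction has trivial kernel and is invertible.)

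I do not expect a genuine obstacle: this is a standard fact about the angle between two subspaces, and the only points needing a little care are pinning down ``invertible'' as invertibility of the restriction to $\mathcal{P}_1$, and the routine observation that $\norm{\mathcal{P}_1\mathcal{P}_2^\bot\mathcal{P}_1}$ over the whole space equals the norm of its restriction and equals its top eigenvalue. Everything else is the elementary spectral bookkeeping via the identity $\mathcal{P}_1\mathcal{P}_2^\bot\mathcal{P}_1|_{\mathcal{P}_1}=\Id-\mathcal{P}_1\mathcal{P}_2\mathcal{P}_1|_{\mathcal{P}_1}$ together with the equality case of the contraction $\norm{\mathcal{P}_2^\bot x}\leq\norm{x}$.
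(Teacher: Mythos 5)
Your proposal is correct. Note that the paper itself gives no proof of this lemma: it is imported verbatim from Lemma 5.6 of the cited reference \citep{liu:tpami:2019}, with the proof explicitly omitted, so there is no in-paper argument to compare against. Your argument is the standard one for this fact (essentially the spectral characterization of the principal angles between two subspaces), and every step checks out: the interpretation of ``invertible'' as invertibility of the restriction of $\mathcal{P}_1\mathcal{P}_2\mathcal{P}_1$ to the range of $\mathcal{P}_1$ is the intended one; the identity $\mathcal{P}_1\mathcal{P}_2^\bot\mathcal{P}_1|_{\mathcal{P}_1}=\Id-\mathcal{P}_1\mathcal{P}_2\mathcal{P}_1|_{\mathcal{P}_1}$ together with $0\preceq\mathcal{P}_1\mathcal{P}_2^\bot\mathcal{P}_1\preceq\Id$ gives $(1)\Leftrightarrow(2)$; and the equality case of $\|\mathcal{P}_2^\bot x\|\leq\|x\|$ correctly identifies the eigenvalue-$1$ eigenvectors with the nonzero elements of $\mathcal{P}_1\cap\mathcal{P}_2^\bot$, giving $(2)\Leftrightarrow(3)$. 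The only points that needed care --- that the operator norm of $\mathcal{P}_1\mathcal{P}_2^\bot\mathcal{P}_1$ on the whole space equals that of its restriction to $\mathcal{P}_1$, and the trivial case $\mathcal{P}_1=\{0\}$ --- are both handled explicitly.
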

\begin{lemm}[Lemma 5.12~\cite{liu:tpami:2019}]\label{lem:optnorm:invpt}
Let $\mathcal{P}_1$ and $\mathcal{P}_2$ be two orthogonal projections onto some subspaces of $\Re^{a\times{}b}$, $\forall{a,b}$. If $\mathcal{P}_1\mathcal{P}_2\mathcal{P}_1$ is invertible, then we have
\begin{align*}
\|\mathcal{P}_1^\bot\mathcal{P}_2\mathcal{P}_1(\mathcal{P}_1\mathcal{P}_2\mathcal{P}_1)^{-1}\| = \sqrt{\frac{1}{1-\|\mathcal{P}_1\mathcal{P}_2^\bot\mathcal{P}_1\|}-1}.
\end{align*}
\end{lemm}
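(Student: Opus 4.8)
The plan is to compute the squared operator norm $\norm{M}^2=\norm{M^*M}$ of
$M:=\mathcal{P}_1^\bot\mathcal{P}_2\mathcal{P}_1(\mathcal{P}_1\mathcal{P}_2\mathcal{P}_1)^{-1}$
by reducing $M^*M$ to an elementary function of the self-adjoint operator
$N:=\mathcal{P}_1\mathcal{P}_2\mathcal{P}_1$, and then reading off its largest eigenvalue. Throughout, $(\mathcal{P}_1\mathcal{P}_2\mathcal{P}_1)^{-1}$ is to be read as the inverse of $N$ on its invariant subspace $\mathcal{P}_1$, extended by zero on $\mathcal{P}_1^\bot$; the hypothesis that $\mathcal{P}_1\mathcal{P}_2\mathcal{P}_1$ is invertible is precisely what makes this restricted inverse exist, and it gives $N^{-1}N=NN^{-1}=\mathcal{P}_1$ as operators on the whole (finite-dimensional) space $\Re^{a\times b}$. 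By Lemma~\ref{lem:basic:quivalence}, the same hypothesis also yields $\norm{\mathcal{P}_1\mathcal{P}_2^\bot\mathcal{P}_1}<1$, so the right-hand side of the claimed identity is well defined and finite.

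First I would expand $M^*M$. Using $(\mathcal{P}_1^\bot\mathcal{P}_2\mathcal{P}_1)^*=\mathcal{P}_1\mathcal{P}_2\mathcal{P}_1^\bot$, the self-adjointness of $N^{-1}$, the identity $\mathcal{P}_1^\bot=\Id-\mathcal{P}_1$, and $\mathcal{P}_2^2=\mathcal{P}_2$, one obtains
\begin{align*}
M^*M=N^{-1}\,\mathcal{P}_1\mathcal{P}_2\mathcal{P}_1^\bot\mathcal{P}_2\mathcal{P}_1\,N^{-1}=N^{-1}(N-N^2)N^{-1}=N^{-1}-\mathcal{P}_1,
\end{align*}
where the last equality uses $N^{-1}N=NN^{-1}=\mathcal{P}_1$. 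This is the heart of the argument: all dependence on $\mathcal{P}_2$ has been absorbed into $N$.

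Next comes the spectral step. The operator $N$ is self-adjoint and positive semidefinite, vanishes on $\mathcal{P}_1^\bot$, and on $\mathcal{P}_1$ has spectrum contained in $(0,1]$ (strict positivity from invertibility on $\mathcal{P}_1$, the upper bound from $\norm{N}\le\norm{\mathcal{P}_1}^2\norm{\mathcal{P}_2}\le1$). Letting $\lambda_{\min}$ be the smallest eigenvalue of $N$ restricted to $\mathcal{P}_1$, the operator $M^*M=N^{-1}-\mathcal{P}_1$ is zero on $\mathcal{P}_1^\bot$ and has eigenvalues $1/\lambda-1\ge0$ on $\mathcal{P}_1$, so $\norm{M}^2=\norm{M^*M}=1/\lambda_{\min}-1$. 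On the other hand $\mathcal{P}_1\mathcal{P}_2^\bot\mathcal{P}_1=\mathcal{P}_1-\mathcal{P}_1\mathcal{P}_2\mathcal{P}_1=\mathcal{P}_1-N$ is likewise zero on $\mathcal{P}_1^\bot$ and has eigenvalues $1-\lambda\in[0,1)$ on $\mathcal{P}_1$, hence $\norm{\mathcal{P}_1\mathcal{P}_2^\bot\mathcal{P}_1}=1-\lambda_{\min}$. Substituting $\lambda_{\min}=1-\norm{\mathcal{P}_1\mathcal{P}_2^\bot\mathcal{P}_1}$ into $\norm{M}^2=1/\lambda_{\min}-1$ and taking the square root yields the assertion.

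The only place that demands care is the bookkeeping around the restricted inverse: one must check that $N^{-1}$, $M^*M$, $N^{-1}-\mathcal{P}_1$ and $\mathcal{P}_1-N$ all act as zero on $\mathcal{P}_1^\bot$ and are simultaneously diagonalized with $N$ on $\mathcal{P}_1$, so that reading the operator norms off the spectrum of $N$ is legitimate. Once that is pinned down, the rest is routine projection algebra together with the already-established Lemma~\ref{lem:basic:quivalence}.
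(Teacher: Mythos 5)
Your proof is correct. Note that the paper itself supplies no argument for this lemma---it is imported verbatim from Lemma~5.12 of the cited reference with the proof explicitly omitted---so there is nothing internal to compare against; your derivation is the standard one for identities of this type. The key algebraic reduction $M^*M=N^{-1}\bigl(N-N^2\bigr)N^{-1}=N^{-1}-\mathcal{P}_1$ with $N=\mathcal{P}_1\mathcal{P}_2\mathcal{P}_1$, followed by reading both $\|M^*M\|=1/\lambda_{\min}-1$ and $\|\mathcal{P}_1\mathcal{P}_2^\bot\mathcal{P}_1\|=\|\mathcal{P}_1-N\|=1-\lambda_{\min}$ off the spectrum of $N$ restricted to the range of $\mathcal{P}_1$, is exactly the right mechanism, and your care with the restricted inverse (extending $N^{-1}$ by zero on $\mathcal{P}_1^\bot$ so that $N^{-1}N=NN^{-1}=\mathcal{P}_1$) closes the only gap where such arguments usually go wrong.
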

\subsection{Proof of Theorem~\ref{main:thm:noiseless}}
The program in~\eqref{eq:lbcnnm:exact} is convex, and thus the proof roadmap is rather standard and very similar to the existing papers such as~\cite{Candes:2009:math,liu:arxiv:2019}. But there are still some technical difficulties need to overcome. In particular, the key to accomplish the proof is to construct a special operator as in the following:
\begin{align}
\mathcal{P}_{\Theta}(Z) = \sum_{i=1}^kT^{i-1}\tilde{A}D\tilde{A}^TT^{1-i}Z\mathbf{e}_i\mathbf{e}_i^T, \forall{Z}\in\mathbb{R}^{q\times{}k},
\end{align}
where $\tilde{A}$ and $T$ are defined in the same way as in~\eqref{eq:covbases}, $\mathbf{e}_i$ is the $i$th standard basis of $\mathbb{R}^k$, and $D\in\mathbb{R}^{q\times{}q}$ is a diagonal matrix constructed by expanding the sampling set $\Omega$. Namely, $D=\diag{\delta_1,\cdots,\delta_q}$, where $\delta_a = 1$ if either $a\in\Omega$ or $a>m$ and $\delta_a=0$ otherwise---the diagonal of $D$ has $(1-\rho_0)m$ zeros. It can be verified that $\mathcal{P}_{\Theta}$ is an orthogonal projection onto some subspace of $\mathbb{R}^{q\times{}k}$, and the orthogonal complement of $\mathcal{P}_{\Theta}$ is given by
\begin{align*}
&\mathcal{P}_{\Theta}^\bot(Z) = Z - \mathcal{P}_{\Theta}(Z) \\
&= \sum_{i=1}^kT^{i-1}\tilde{A}(\Id_q-D)\tilde{A}^TT^{1-i}Z\mathbf{e}_i\mathbf{e}_i^T, \forall{Z}\in\mathbb{R}^{q\times{}k}.
\end{align*}

Hereafter, denote by $(\cdot)^*$ the Hermitian adjoint (or conjugate) of a linear operator. The lemma below states some properties pretaining to $\mathcal{P}_{\Theta}$.
\begin{lemm}\label{lem:basic:adjoint}
For any $Z\in\mathbb{R}^{q\times{}k}$ and $\mathbf{z}\in\mathbb{R}^m$, we have
\begin{align*}
&A^T\mathcal{A}_k^*\mathcal{P}_{\Theta}(Z) = \mathcal{P}_{\Omega}A^T\mathcal{A}_k^*(Z)\textrm{ and }\mathcal{P}_{\Theta}\mathcal{A}_kA(\mathbf{z}) = \mathcal{A}_kA\mathcal{P}_\Omega(\mathbf{z}).
\end{align*}
\end{lemm}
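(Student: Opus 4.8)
\textbf{Proof proposal for Lemma~\ref{lem:basic:adjoint}.}

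The plan is to verify the two stated identities by direct computation, unravelling the definitions of $\mathcal{P}_{\Theta}$, the convolution map $\mathcal{A}_k$, and its Hermitian adjoint $\mathcal{A}_k^*$. The two claims are adjoint to one another, so I expect it will suffice to prove one of them carefully and obtain the other by taking adjoints; concretely, the relation $A^T\mathcal{A}_k^*\mathcal{P}_{\Theta} = \mathcal{P}_{\Omega}A^T\mathcal{A}_k^*$ is the adjoint statement of $\mathcal{P}_{\Theta}\mathcal{A}_k A = \mathcal{A}_k A\mathcal{P}_{\Omega}$, using that $\mathcal{P}_{\Theta}$, $\mathcal{P}_{\Omega}$ are self-adjoint projections and that $(\mathcal{A}_k A)^* = A^T\mathcal{A}_k^*$. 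So I would focus on establishing $\mathcal{P}_{\Theta}\mathcal{A}_k A(\mathbf{z}) = \mathcal{A}_k A\mathcal{P}_{\Omega}(\mathbf{z})$ for all $\mathbf{z}\in\mathbb{R}^m$, and then remark that taking adjoints of both sides yields the first identity.

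The key structural fact to exploit is the column-shift structure of the convolution matrix: from~\eqref{eq:convmtx}, the $i$th column of $\mathcal{A}_k(\mathbf{a})$ is $T^{i-1}\mathbf{a}$, where $T$ is the cyclic row-shift permutation. Hence $\mathcal{A}_k(A\mathbf{z})\mathbf{e}_i = T^{i-1}A\mathbf{z}$, and applying the definition of $\mathcal{P}_{\Theta}$,
\begin{align*}
\mathcal{P}_{\Theta}(\mathcal{A}_k(A\mathbf{z})) = \sum_{i=1}^k T^{i-1}\tilde{A}D\tilde{A}^T T^{1-i}\bigl(T^{i-1}A\mathbf{z}\bigr)\mathbf{e}_i^T = \sum_{i=1}^k \bigl(T^{i-1}\tilde{A}D\tilde{A}^T A\mathbf{z}\bigr)\mathbf{e}_i^T.
\end{align*}
The crux is then the identity $\tilde{A}D\tilde{A}^T A = A\mathcal{P}_{\Omega}$ as maps $\mathbb{R}^m\to\mathbb{R}^q$. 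Since $\tilde{A}=[A,A^\bot]$ is orthogonal, $\tilde{A}^T A = [\Id_m; 0]$ (the first $m$ rows give $A^TA=\Id_m$, the last $q-m$ give $(A^\bot)^TA = 0$). Multiplying by $D=\diag{\delta_1,\dots,\delta_q}$ and then by $\tilde{A}$: because $\delta_a=1$ for all $a>m$, the matrix $D[\Id_m;0]$ equals $[D_m; 0]$ where $D_m=\diag{\delta_1,\dots,\delta_m}$ is exactly the diagonal $0$/$1$ matrix of the sampling set $\Omega$, i.e.\ $D_m = \mathcal{P}_{\Omega}$ acting on $\mathbb{R}^m$. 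Therefore $\tilde{A}D\tilde{A}^T A = \tilde{A}[D_m;0] = A D_m = A\mathcal{P}_{\Omega}$. Substituting back, $\mathcal{P}_{\Theta}(\mathcal{A}_k(A\mathbf{z})) = \sum_{i=1}^k (T^{i-1}A\mathcal{P}_{\Omega}(\mathbf{z}))\mathbf{e}_i^T = \mathcal{A}_k(A\mathcal{P}_{\Omega}(\mathbf{z}))$, which is the desired identity.

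The only genuinely delicate point is the bookkeeping with $\tilde{A}$ and $A^\bot$: one must use both $A^TA=\Id_m$ and $(A^\bot)^TA=0$ (both consequences of orthogonality of $\tilde{A}$), and one must use the specific construction of $D$ with $\delta_a=1$ forced on the expansion coordinates $a>m$, so that the ``padding'' rows of $A^\bot$ are projected trivially and do not interfere. Once that is in hand, everything reduces to the shift identity $\mathcal{A}_k(\mathbf{a})\mathbf{e}_i = T^{i-1}\mathbf{a}$ together with linearity, and the adjoint identity follows formally. I do not anticipate any analytic obstacle here; the lemma is essentially a compatibility statement showing that $\mathcal{P}_{\Theta}$ is the ``lift'' of the sampling projection $\mathcal{P}_{\Omega}$ through the map $\mathbf{z}\mapsto\mathcal{A}_k(A\mathbf{z})$, and the proof is a short unwinding of definitions.
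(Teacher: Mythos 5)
Your proof is correct and follows essentially the same route as the paper's: both unwind the definition of $\mathcal{P}_{\Theta}$ against the column-shift structure $\mathcal{A}_k(\mathbf{a})\mathbf{e}_i = T^{i-1}\mathbf{a}$ and reduce everything to the identity $\tilde{A}D\tilde{A}^TA = A\mathcal{P}_{\Omega}$, which you spell out more explicitly than the paper does; the only cosmetic difference is that you obtain the first identity by taking adjoints of the second (legitimate, since $\mathcal{P}_{\Theta}$ and $\mathcal{P}_{\Omega}$ are self-adjoint) whereas the paper expands both directly. One small nit: the step $D[\Id_m;0]=[D_m;0]$ holds simply because the bottom $q-m$ rows of $\tilde{A}^TA$ are already zero, not because $\delta_a=1$ is forced for $a>m$.
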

\begin{proof}First of all, the definition of convolution matrix gives that
\begin{align*}
&\mathcal{A}_k(\mathbf{c}) = \sum_{i=1}^kT^{i-1}\mathbf{c}\mathbf{e}_i^T,\forall\mathbf{c}\in\mathbb{R}^q,\\\
&\mathcal{A}_k^*(C) = \sum_{i=1}^kT^{1-i}C\mathbf{e}_i,\forall{}C\in\mathbb{R}^{q\times{}k}.
\end{align*}
Regarding the first claim, we have
\begin{align*}
&A^T\mathcal{A}_k^*\mathcal{P}_{\Theta}(Z)= \sum_{i,j=1}^kA^TT^{1-j}T^{i-1}\tilde{A}D\tilde{A}^TT^{1-i}Z\mathbf{e}_i\mathbf{e}_i^T\mathbf{e}_j \\
&= \sum_{i=1}^kA^T\tilde{A}D\tilde{A}^TT^{1-i}Z\mathbf{e}_i =A^T\tilde{A}D\tilde{A}^T\sum_{i=1}^kT^{1-i}Z\mathbf{e}_i\\
&= A^T\tilde{A}D\tilde{A}^T\mathcal{A}_k^*(Z) = \mathcal{P}_{\Omega}A^T\mathcal{A}_k^*(Z).
\end{align*}
For the second claim, we have
\begin{align*}
&\mathcal{P}_{\Theta}\mathcal{A}_kA(\mathbf{z}) =  \sum_{i,j=1}^kT^{i-1}\tilde{A}D\tilde{A}^TT^{1-i}T^{j-1}A\mathbf{z}\mathbf{e}_j^T\mathbf{e}_i\mathbf{e}_i^T \\&=\sum_{i=1}^kT^{i-1}\tilde{A}D\tilde{A}^TA\mathbf{z}\mathbf{e}_i^T=\mathcal{A}_k\tilde{A}D\tilde{A}^TA(\mathbf{z})=\mathcal{A}_kA\mathcal{P}_\Omega(\mathbf{z}).
\end{align*}
\end{proof}
\subsubsection{Dual Condition}
The following lemma establishes the dual conditions under which the solution to the LbCNNM problem in~\eqref{eq:lbcnnm:exact} is unique and exact.
\begin{lemm}\label{lem:dual}Let the skinny SVD of $\mathcal{A}_k(A\mathbf{y})$ be $U\Sigma{}V^T$. Denote by $\mathcal{P}_{T}(\cdot)=\mathcal{P}_U(\cdot)+\mathcal{P}_V(\cdot)-\mathcal{P}_U\mathcal{P}_V(\cdot)$ the orthogonal projection onto the sum of the column space $\mathcal{P}_U$ and the row space $\mathcal{P}_V$. Then $\mathbf{y}$ is the unique minimizer to~\eqref{eq:lbcnnm:exact} provided that:
\begin{itemize}
\item[1.] $\mathcal{P}_{\Theta}^\bot\cap\mathcal{}P_{T}=\{0\}$.
\item[2.] There exists $H\in\mathbb{R}^{q\times{}k}$ such that $\mathcal{}P_{T}\mathcal{P}_{\Theta}(H) = UV^T$ and $\|\mathcal{}P_{T}^\bot\mathcal{P}_{\Theta}(H)\|<1$.
\end{itemize}
\end{lemm}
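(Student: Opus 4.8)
The plan is to run the standard dual-certificate argument for nuclear-norm minimization, transported to the operator $\mathcal{A}_kA$ and the projection $\mathcal{P}_\Theta$. Write an arbitrary feasible point as $\mathbf{x}=\mathbf{y}+\mathbf{d}$, so that $\mathcal{P}_\Omega(\mathbf{d})=0$. The first step is to record two structural facts. By the second identity of Lemma~\ref{lem:basic:adjoint}, the induced perturbation $\Delta:=\mathcal{A}_kA(\mathbf{d})$ satisfies $\mathcal{P}_\Theta(\Delta)=\mathcal{A}_kA\mathcal{P}_\Omega(\mathbf{d})=0$, i.e.\ $\Delta\in\mathcal{P}_\Theta^\bot$. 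Moreover the map $\mathbf{d}\mapsto\mathcal{A}_kA(\mathbf{d})$ is injective, since $A^TA=\Id_m$ and the circular-convolution matrix of a nonzero vector is nonzero; hence it suffices to prove $\Delta=0$ at any minimizer.

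Second, I would apply the subgradient inequality for $\norm{\cdot}_*$ at $\mathcal{A}_k(A\mathbf{y})=U\Sigma V^T$: for every $W$ with $\mathcal{P}_T(W)=0$ and $\norm{W}\le1$, the matrix $UV^T+W$ is a subgradient, and since $W\in\mathcal{P}_T^\bot$,
\begin{align*}
\norm{\mathcal{A}_k(A\mathbf{x})}_*\ge\norm{\mathcal{A}_k(A\mathbf{y})}_*+\langle UV^T,\Delta\rangle+\langle W,\mathcal{P}_T^\bot(\Delta)\rangle.
\end{align*}
Choosing $W$ to be the matrix of left/right singular vectors of $\mathcal{P}_T^\bot(\Delta)$ --- which lies in $\mathcal{P}_T^\bot$ and has unit operator norm --- gives $\langle W,\mathcal{P}_T^\bot(\Delta)\rangle=\norm{\mathcal{P}_T^\bot(\Delta)}_*$.

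Third, the certificate $H$ enters through the orthogonal decomposition $\langle\mathcal{P}_\Theta(H),\Delta\rangle=\langle\mathcal{P}_T\mathcal{P}_\Theta(H),\Delta\rangle+\langle\mathcal{P}_T^\bot\mathcal{P}_\Theta(H),\Delta\rangle$. The left side vanishes because $\Delta\in\mathcal{P}_\Theta^\bot$ and $\mathcal{P}_\Theta$ is self-adjoint; the first term on the right equals $\langle UV^T,\Delta\rangle$ by condition 2; and the second term is controlled, via $\mathcal{P}_T^\bot\mathcal{P}_\Theta(H)\in\mathcal{P}_T^\bot$ and H\"older's inequality between operator and nuclear norms, by $\norm{\mathcal{P}_T^\bot\mathcal{P}_\Theta(H)}\,\norm{\mathcal{P}_T^\bot(\Delta)}_*$. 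Rearranging gives $\langle UV^T,\Delta\rangle\ge-\norm{\mathcal{P}_T^\bot\mathcal{P}_\Theta(H)}\,\norm{\mathcal{P}_T^\bot(\Delta)}_*$, and substituting into the subgradient bound yields
\begin{align*}
\norm{\mathcal{A}_k(A\mathbf{x})}_*\ge\norm{\mathcal{A}_k(A\mathbf{y})}_*+\bigl(1-\norm{\mathcal{P}_T^\bot\mathcal{P}_\Theta(H)}\bigr)\norm{\mathcal{P}_T^\bot(\Delta)}_*,
\end{align*}
with coefficient strictly positive by condition 2.

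Finally, at any minimizer the last display forces $\norm{\mathcal{P}_T^\bot(\Delta)}_*=0$, i.e.\ $\Delta\in\mathcal{P}_T$; combined with $\Delta\in\mathcal{P}_\Theta^\bot$, condition 1 gives $\Delta=0$, and injectivity of $\mathcal{A}_kA$ then forces $\mathbf{d}=0$, so $\mathbf{y}$ is the unique minimizer. The argument is essentially mechanical once the two facts of the first step are in place; the only spots needing care are verifying $\Delta\in\mathcal{P}_\Theta^\bot$ (exactly where Lemma~\ref{lem:basic:adjoint} is used) and checking that equality propagates correctly through the chain of inequalities. I do not expect a genuine obstacle here --- the substantive difficulty of the paper is the separate task of exhibiting such an $H$ and verifying conditions 1--2 under the sampling bound, which belongs to the proof of Theorem~\ref{main:thm:noiseless}.
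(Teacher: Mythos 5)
Your proposal is correct and follows essentially the same route as the paper's proof: both use the nuclear-norm subgradient at $U\Sigma V^T$ with the extremal choice of $W\in\mathcal{P}_T^\bot$, both invoke Lemma~\ref{lem:basic:adjoint} to get $\mathcal{A}_kA(\mathbf{d})\in\mathcal{P}_\Theta^\bot$ and $\langle\mathcal{P}_\Theta(H),\mathcal{A}_kA(\mathbf{d})\rangle=0$, and both close with condition 1 plus injectivity of $\mathcal{A}_kA$ (which the paper makes explicit via $\mathbf{a}=(1/k)A^T\mathcal{A}_k^*\mathcal{A}_kA(\mathbf{a})$). Your splitting of $\langle\mathcal{P}_\Theta(H),\Delta\rangle$ into its $\mathcal{P}_T$ and $\mathcal{P}_T^\bot$ parts is just a rewriting of the paper's substitution $UV^T=\mathcal{P}_\Theta(H)-\mathcal{P}_T^\bot\mathcal{P}_\Theta(H)$, so there is no substantive difference.
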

\begin{proof}
Take $W = \mathcal{}P_{T}^\bot\mathcal{P}_{\Theta}(H)$. Then $A^T\mathcal{A}_k^*(UV^T+W) = A^T\mathcal{A}_k^*\mathcal{P}_{\Theta}(H)$. By Lemma~\ref{lem:basic:adjoint},
\begin{align*}
A^T\mathcal{A}_k^*\mathcal{P}_{\Theta}(H) = \mathcal{P}_{\Omega}A^T\mathcal{A}_k^*(H)\in\mathcal{P}_{\Omega}.
\end{align*}
By the standard convexity arguments, $\mathbf{y}$ is an optimal solution to problem in~\eqref{eq:lbcnnm:exact}. So it remains to prove that $\mathbf{y}$ is the unique minimizer. To this end, consider a feasible solution $\mathbf{y}+\mathbf{a}$ such that $\mathcal{P}_{\Omega}(\mathbf{a}) = 0$. Then it follows from Lemma~\ref{lem:basic:adjoint} that
\begin{align*}
&\mathcal{P}_{\Theta}\mathcal{A}_kA(\mathbf{a}) = \mathcal{A}_kA\mathcal{P}_{\Omega}(\mathbf{a}) = 0,\\
&\langle{}\mathcal{P}_{\Theta}(H), \mathcal{A}_kA(\mathbf{a})\rangle =  \langle{}H, \mathcal{P}_{\Theta}\mathcal{A}_kA(\mathbf{a})\rangle = 0.
\end{align*}
Next, we shall show that the objective strictly grows unless $\mathbf{a}=0$. By the convexity of the nuclear norm,
\begin{align*}
&\|\mathcal{A}_kA(\mathbf{y}+\mathbf{a})\|_* - \|\mathcal{A}_kA(\mathbf{y})\|_* \geq \langle{}A^T\mathcal{A}_k^*(UV^T+F), \mathbf{a}\rangle\\
&= \langle{}UV^T+F, \mathcal{A}_kA(\mathbf{a})\rangle
\end{align*}
holds for any $F\in\mathcal{P}_{T}^\bot$ that obeys $\|F\|\leq{}1$. Choose $F$ such that $\langle{}F, \mathcal{A}_kA(\mathbf{a})\rangle = \|\mathcal{P}_T^\bot\mathcal{A}_kA(\mathbf{a})\|_*$. As a result, we have
\begin{align*}
&\langle{}UV^T+F, \mathcal{A}_kA(\mathbf{a})\rangle = \langle{}\mathcal{P}_{\Theta}(H)+F - W, \mathcal{A}_kA(\mathbf{a})\rangle\\
&= \langle{}F - W, \mathcal{A}_kA(\mathbf{a})\rangle\geq(1-\|W\|)\|\mathcal{P}_T^\bot\mathcal{A}_kA(\mathbf{a})\|_*.
\end{align*}
Since $\|W\|<1$, $\|\mathcal{A}_kA(\mathbf{y}+\mathbf{a})\|_*$ is greater than $\|\mathcal{A}_kA(\mathbf{y})\|_*$ unless $\mathcal{A}_kA(\mathbf{a})\in\mathcal{P}_T$. This can happen only when $\mathcal{A}_kA(\mathbf{a})=0$, because $\mathcal{A}_kA(\mathbf{a})\in\mathcal{P}_{\Theta}^\bot$ and $\mathcal{P}_{\Theta}^\bot\cap\mathcal{}P_T=\{0\}$. Since $\mathbf{a}=(1/k)A^T\mathcal{A}_k^*\mathcal{A}_kA(\mathbf{a})=0$, it follows that $\mathbf{y}$ is the unique minimizer.
\end{proof}
\subsubsection{Dual Certificate}
To construct the dual certificate, let's temporarily assume that
\begin{align*}
\|\mathcal{P}_T\mathcal{P}_{\Theta}^\bot\mathcal{P}_{T}\|<0.5.
\end{align*}
By Lemma~\ref{lem:basic:quivalence}, $\mathcal{P}_T\mathcal{P}_{\Theta}\mathcal{P}_{T}$ is invertible. Thus, we can construct $H$ as
\begin{align*}
H = \mathcal{P}_{\Theta}\mathcal{P}_T(\mathcal{P}_T\mathcal{P}_\Theta\mathcal{P}_T)^{-1}(UV^T).
\end{align*}
Then $H$ satisfies the dual conditions listed in Lemma~\ref{lem:dual}, because $\mathcal{P}_T\mathcal{P}_{\Theta}(H)=UV^T$ and it follows from Lemma~\ref{lem:optnorm:invpt} that
\begin{align*}
&\|\mathcal{}P_{T}^\bot\mathcal{P}_{\Theta}(H)\| \leq \|\mathcal{P}_{T}^\bot\mathcal{P}_{\Theta}\mathcal{P}_{T}(\mathcal{P}_{T}\mathcal{P}_{\Theta}\mathcal{P}_{T})^{-1}\|\|UV^T\|\\
&= \sqrt{\frac{1}{1-\|\mathcal{P}_{T}\mathcal{P}_{\Theta}^\bot\mathcal{P}_{T}\|}-1}<1.
\end{align*}

Hence, the proof is finished as long as $\|\mathcal{P}_T\mathcal{P}_{\Theta}^\bot\mathcal{P}_{T}\|<0.5$ is proven. Using arguments similar to the proof procedure of Lemma 5.11 in~\cite{liu:tpami:2019}, we have
\begin{align*}
\|\mathcal{P}_T\mathcal{P}_{\Theta}^\bot\mathcal{P}_{T}\|\leq{}\|\mathcal{P}_U\mathcal{P}_{\Theta}^\bot\mathcal{P}_U\|+\|\mathcal{P}_V\mathcal{P}_{\Theta}^\bot\mathcal{P}_V\|.
\end{align*}
In the rest of the proof, we will bound $\|\mathcal{P}_U\mathcal{P}_{\Theta}^\bot\mathcal{P}_U\|$ and $\|\mathcal{P}_V\mathcal{P}_{\Theta}^\bot\mathcal{P}_V\|$ individually.
\subsubsection{Bounding $\|\mathcal{P}_U\mathcal{P}_{\Theta}^\bot\mathcal{P}_U\|$}
Denote by $\mathrm{vec}(\cdot)$ the vector formed by stacking the columns of a matrix into a single column vector. For any $Z\in\mathbb{R}^{q\times{}k}$, we have
\begin{align*}
&\mathrm{vec}(\mathcal{P}_U\mathcal{P}_{\Theta}^\bot\mathcal{P}_U(Z)) = \\
&\sum_{i=1}^k(\mathbf{e}_i\mathbf{e}_i^T\otimes{}(UU^TT^{i-1}\tilde{A}(\Id_q-D)\tilde{A}^TT^{1-i}UU^T))\mathrm{vec}(Z)\\
&=(\Id_k\otimes{}UU^T)(\sum_{i=1}^k\mathbf{e}_i\mathbf{e}_i^T\otimes{}T^{i-1}\tilde{A}(\Id_q-D)\tilde{A}^TT^{1-i})\\
&(\Id_k\otimes{}UU^T)\mathrm{vec}(Z).
\end{align*}
Define $M$ as
\begin{align}\label{eq:matrix:m}
&M=\sum_{i=1}^k\mathbf{e}_i\mathbf{e}_i^T\otimes{}T^{i-1}\tilde{A}(\Id_q-D)\tilde{A}^TT^{1-i}=\\\nonumber
&\left[\begin{array}{ccc}
\hspace{-0.1in}\tilde{A}(\Id_q\hspace{-0.03in}-\hspace{-0.03in}D)\tilde{A}^T & &\\
&\hspace{-0.2in} T\tilde{A}(\Id_q\hspace{-0.03in}-\hspace{-0.03in}D)\tilde{A}^TT^{-1}\hspace{-0.2in}&\\
&\ddots&\\
&&\hspace{-0.3in}T^{k-1}\tilde{A}(\Id_q\hspace{-0.03in}-\hspace{-0.03in}D)\tilde{A}^TT^{1-k}\hspace{-0.1in}
\end{array}\right]\hspace{-0.08in}\in\hspace{-0.03in}\mathbb{R}^{kq\times{}kq}.
\end{align}
Since $(\Id_k\otimes{}UU^T)M(\Id_k\otimes{}UU^T)$ is block-diagonal, we have
\begin{align*}
&\|\mathcal{P}_U\mathcal{P}_{\Theta}^\bot\mathcal{P}_U\|=\max_{i\in[k]}\|UU^TT^{i-1}\tilde{A}(\Id_q-D)\tilde{A}^TT^{1-i}UU^T\|\\
&\leq(1-\rho_0)m\max_{i\in[k],j\in[m]}\|U^TT^{i-1}A\tilde{\mathbf{e}}_j\|_2^2,
\end{align*}
where $\tilde{\mathbf{e}}_j$ is the $j$th standard basis of $\mathbb{R}^m$. By Definition~\ref{defn:geconvcoherence},
\begin{align*}
&\|\mathcal{P}_U\mathcal{P}_{\Theta}^\bot\mathcal{P}_U\|\leq{}\frac{\mu_A(\mathbf{y})(1-\rho_0)mr}{q}<\frac{0.25k}{q}\leq0.25.
\end{align*}
\subsubsection{Bounding $\|\mathcal{P}_V\mathcal{P}_{\Theta}^\bot\mathcal{P}_V\|$}
Considering the vectorization of $\mathcal{P}_V\mathcal{P}_{\Theta}^\bot\mathcal{P}_V$, we have
\begin{align*}
&\mathrm{vec}(\mathcal{P}_V\mathcal{P}_{\Theta}^\bot\mathcal{P}_V(Z))=(VV^T\otimes{}\Id_q)\\
&(\sum_{i=1}^k\mathbf{e}_i\mathbf{e}_i^T\otimes{}T^{i-1}\tilde{A}(\Id_q-D)\tilde{A}^TT^{1-i})(VV^T\otimes{}\Id_q)\mathrm{vec}(Z)\\
&=(VV^T\otimes{}\Id_q)M(VV^T\otimes{}\Id_q)\mathrm{vec}(Z),
\end{align*}
where $M$ is the block-diagonal matrix defined in~\eqref{eq:matrix:m}. Denote by $M_{(i,i)}$ the $(i,i)$th block of $M$, and let $\bar{h}=(1-\rho_0)m$. Then we have
\begin{align*}
M_{(i,i)} = T^{i-1}A_{\bar{h}}A_{\bar{h}}^TT^{1-i}, \forall{}i\in[k],
\end{align*}
where $A_{\bar{h}}=[\mathbf{a}_1,\cdots,\mathbf{a}_{\bar{h}}]\in\mathbb{R}^{q\times{}\bar{h}}$ is formed by selecting $\bar{h}$ columns from $\tilde{A}$ such that $A_{\bar{h}}A_{\bar{h}}^T=\tilde{A}(\Id_q-D)\tilde{A}^T$. Let $Q=VV^T\otimes{}\Id_q$. It can be verified that $Q\in\mathbb{R}^{qk\times{}qk}$ is a block matrix with $k$ row partitions and $k$ column partitions, and its $(i,j)$th block, denoted as $Q_{(i,j)}$, is a $q\times{}q$ diagonal matrix given by $Q_{(i,j)} = \mathbf{v}_i^T\mathbf{v}_j\Id_q, \forall{}i,j\in[k]$, where $\mathbf{v}_i=V^T\mathbf{e}_i\in\mathbb{R}^r$ is the $i$th row vector of $V$. Denote
\begin{align*}
Q_{(:,j)} = \left[\begin{array}{c}
\mathbf{v}_1^T\mathbf{v}_j\Id_q\\
\mathbf{v}_2^T\mathbf{v}_j\Id_q\\
\vdots\\
\mathbf{v}_k^T\mathbf{v}_j\Id_q
\end{array}\right]\in\mathbb{R}^{kq\times{}q},\forall{j}\in{}[k].
\end{align*}
Then we have
\begin{align*}
&Q_{(:,i)}^TQ_{(:,j)} = \sum_{d=1}^k\mathbf{v}_d^T\mathbf{v}_i\mathbf{v}_d^T\mathbf{v}_j\Id_q\\
&=\sum_{d=1}^k\mathbf{v}_i^T(\mathbf{v}_d\mathbf{v}_d^T)\mathbf{v}_j\Id_q=\mathbf{v}_i^T\mathbf{v}_j\Id_q.
\end{align*}
For any $\mathbf{z}\in\mathbb{R}^{qk}$ with $\|\mathbf{z}\|_2^2=1$, we have
\begin{align*}
&\mathbf{z}^T(VV^T\otimes{}\Id_q)M(VV^T\otimes{}\Id_q)\mathbf{z} = \sum_{j=1}^k\|\mathbf{z}^TQ_{(:,j)}T^{j-1}A_{\bar{h}}\|_2^2\\
&=\sum_{j=1}^k\sum_{d=1}^{\bar{h}}(\mathbf{z}^TQ_{(:,j)}T^{j-1}\mathbf{a}_d)^2\\
&=\sum_{d=1}^{\bar{h}}\|\mathbf{z}^T[Q_{(:,1)}\mathbf{a}_d,Q_{(:,2)}T\mathbf{a}_d,\cdots,Q_{(:,k)}T^{k-1}\mathbf{a}_d]\|_2^2\\
&\leq\sum_{d=1}^{\bar{h}}\|[Q_{(:,1)}\mathbf{a}_d,Q_{(:,2)}T\mathbf{a}_d,\cdots,Q_{(:,k)}T^{k-1}\mathbf{a}_d]\|^2.
\end{align*}
So, it remains to bound the operator norm of $\tilde{P}_d=[Q_{(:,1)}\mathbf{a}_d,Q_{(:,2)}T\mathbf{a}_d,\cdots,Q_{(:,k)}T^{k-1}\mathbf{a}_d], \forall{}d\in[\bar{h}]$. Let $P_d=\tilde{P}_d^T\tilde{P}_d\in\mathbb{R}^{k\times{}k}$. It can be seen that the $(i,j)$th entry of $P_d$, denoted as $d_{ij}$, is given by
\begin{align*}
d_{ij} = \mathbf{a}_d^TT^{1-i}Q_{(:,i)}^TQ_{(:,j)}T^{j-1}\mathbf{a}_d=(\mathbf{v}_i^T\mathbf{v}_j)(\mathbf{a}_d^TT^{j-i}\mathbf{a}_d).
\end{align*}
Notice that $T^{-i}= T^{q-i}$, $\forall{}i$, due to the property of circular convolution. By~\eqref{eq:convcoherence} and Definition~\ref{defn:basiscoherence},
\begin{align*}
&\sum_{j=1}^k|d_{ij}|\leq\frac{\hat{\mu}_2(A\mathbf{y})r}{k}\sum_{j=1}^k|\mathbf{a}_d^TT^{j-i}\mathbf{a}_d|\\
&\leq\frac{\bar{\mu}(\mathbf{a}_d)\hat{\mu}_2(A\mathbf{y})r}{k},\forall{}i\in[k],
\end{align*}
from which it follows that $\|\tilde{P}_d\|^2=\|P_d\|\leq\bar{\mu}(\mathbf{a}_d)\hat{\mu}_2(A\mathbf{y})$ $r/k$, and which simply leads to
\begin{align*}
\|\mathcal{P}_V\mathcal{P}_{\Theta}^\bot\mathcal{P}_V\|\leq\frac{\bar{\mu}(A)\hat{\mu}_2(A\mathbf{y})(1-\rho_0)mr}{k}<0.25.
\end{align*}
\subsection{Proof of Theorem~\ref{main:thm:noisy}}
Let $\mathbf{c} = \mathbf{x} - \mathbf{y}$ and $C =\mathcal{A}_kA(\mathbf{c})$. By triangle inequality, $\|\mathcal{P}_{\Omega}(\mathbf{c})\|_2\leq2\epsilon$. Thus,
\begin{align*}
&\|\mathcal{P}_{\Theta}(C)\|_F^2 = \|\mathcal{P}_{\Theta}\mathcal{A}_kA(\mathbf{c})\|_F^2\\
&=\|\mathcal{A}_kA\mathcal{P}_{\Omega}(\mathbf{c})\|_F^2= k\|A\mathcal{P}_{\Omega}(\mathbf{c})\|_F^2\leq4k\epsilon^2.
\end{align*}
In the rest of the proof, we will bound $\|\mathcal{P}_{\Theta}^\bot(C)\|_F$. To this end, define $H$ and $W$ in the same way as in the proof to Theorem~\ref{main:thm:noiseless}. Provided that $\rho_0>1-0.22k/(\mu{}mr)$, it can be proven that $\|W\|=\|\mathcal{P}_{T}^\bot\mathcal{P}_{\Theta}(H)\|<0.9$, thereby $\|H\|\leq1.9$. Since $\mathbf{x}=\mathbf{y}+\mathbf{c}$ is an optimal solution to~\eqref{eq:lbcnnm:noisy}, it follows that
\begin{align*}
&0\geq\|\mathcal{A}_kA(\mathbf{y}+\mathbf{c})\|_* - \|\mathcal{A}_kA(\mathbf{y})\|_*\geq{}\langle{}\mathcal{P}_{\Theta}(H)-W+F,C\rangle\\
&\geq(1-\|W\|)\|\mathcal{P}_T^\bot(C)\|_* + \langle{}\mathcal{P}_{\Theta}(H),C\rangle\\
&\geq0.1\|\mathcal{P}_T^\bot(C)\|_* + \langle{}\mathcal{P}_{\Theta}(H),C\rangle.
\end{align*}
Hence,
\begin{align*}
&\|\mathcal{P}_T^\bot(C)\|_*\leq-10\langle{}\mathcal{P}_{\Theta}(H),\mathcal{P}_{\Theta}(C)\rangle\\
&\leq10\|\mathcal{P}_{\Theta}(H)\|\|\mathcal{P}_{\Theta}(C)\|_*\leq38k\epsilon,
\end{align*}
which leads to $\|\mathcal{P}_T^\bot(C)\|_F\leq38k\epsilon$. Thus we have
\begin{align*}
&\|\mathcal{P}_{T}^\bot\mathcal{P}_{\Theta}^\bot(C)\|_F\leq\|\mathcal{P}_{T}^\bot(C)\|_F+\|\mathcal{P}_{T}^\bot\mathcal{P}_{\Theta}(C)\|_F\\
&\leq(38k+2\sqrt{k})\epsilon.
\end{align*}
In addition, we have
\begin{align*}
&\|\mathcal{P}_{\Theta}\mathcal{P}_{T}\mathcal{P}_{\Theta}^\bot(C)\|_F^2=\langle{}\mathcal{P}_{\Theta}\mathcal{P}_{T}\mathcal{P}_{\Theta}^\bot(C),\mathcal{P}_{\Theta}\mathcal{P}_{T}\mathcal{P}_{\Theta}^\bot(C)\rangle\\
&=\langle\mathcal{P}_{T}\mathcal{P}_{\Theta}\mathcal{P}_{T}\mathcal{P}_{\Theta}^\bot(C),\mathcal{P}_{T}\mathcal{P}_{\Theta}^\bot(C)\rangle\\
&\geq(1-\|\mathcal{P}_{T}\mathcal{P}_{\Theta}^\bot\mathcal{P}_{T}\|)\|\mathcal{P}_{T}\mathcal{P}_{\Theta}^\bot(C)\|_F^2\geq\frac{1}{2}\|\mathcal{P}_{T}\mathcal{P}_{\Theta}^\bot(C)\|_F^2,
\end{align*}
from which it follows that
\begin{align*}
&\|\mathcal{P}_{T}\mathcal{P}_{\Theta}^\bot(C)\|_F^2\leq2\|\mathcal{P}_{\Theta}\mathcal{P}_{T}\mathcal{P}_{\Theta}^\bot(C)\|_F^2\\
&=2\|\mathcal{P}_{\Theta}\mathcal{P}_{T}^\bot\mathcal{P}_{\Theta}^\bot(C)\|_F^2\leq2(38k+2\sqrt{k})^2\epsilon^2.
\end{align*}
Via combining the above justifications, we have
\begin{align*}
&\|C\|_F\leq\|\mathcal{P}_{T}\mathcal{P}_{\Theta}^\bot(C)\|_F + \|\mathcal{P}_{T}\mathcal{P}_{\Theta}(C)\|_F\\
&+\|\mathcal{P}_{T}^\bot(C)\|_F\leq(\sqrt{2}+1)(38k + 2\sqrt{k})\epsilon.
\end{align*}
Finally, the proof is finished by $\|\mathbf{c}\|_2 = \|A\mathbf{c}\|_2=\|C\|_F/\sqrt{k}$.
\begin{figure}[h!]
\begin{center}
\includegraphics[width=0.49\textwidth]{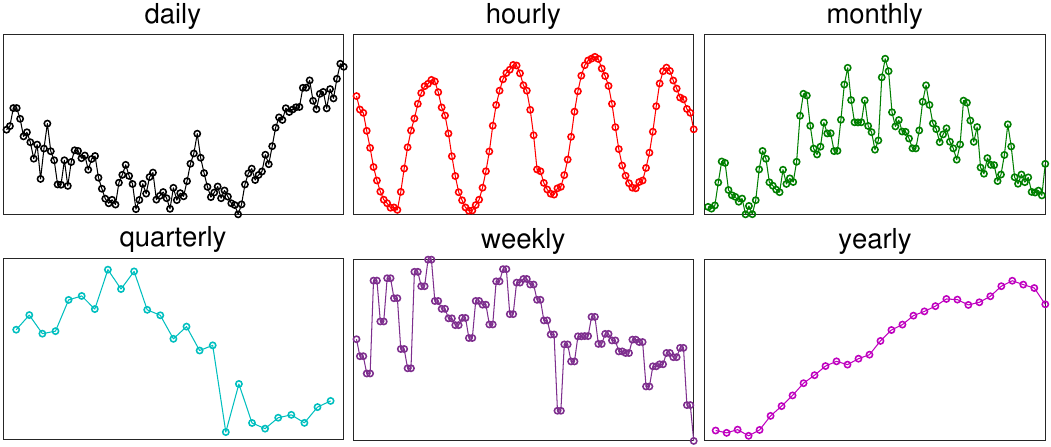}\vspace{-0.15in}
\caption{Examples of the time series in M4.}\label{fig:m4}
\end{center}
\end{figure}
\section{Experiments}\label{sec:exp}
All experiments are conducted on the Matlab 2019a platform. The source codes are available at https://github. com/gcliu1982/LbCNNM.
\subsection{Data and Setup}
\subsubsection{Experimental Data}
We consider for experiments two datasets: TSDL~\cite{TSDL:dataset} and M4~\cite{M4:ijf:2020}. TSDL is consist of 618 univariate and 38 multivariate series of sampling-frequencies ranging from 0.1 to 365. For the ease of conducting experiments, we separate each multivariate series into multiple univariate ones, obtaining 761 univariate series in total. We set the forecast horizon $h$ as follows: $h=6$ for the series whose sampling-frequency is lower than 2, $h=24$ for the series with sampling-frequency higher than 100, and $h=12$ for the rest. We consider only the series longer than $10h$---the training part is longer than $9h$, thereby 542 series is selected for experiments. To avoid possible problems in calculating various evaluation metrics, the values in each series are scaled to be greater than or equal to 10 by simply adding a constant.

M4 is a large-scale dataset containing 100,000 series from six sampling-frequency categories, including 4227 Daily series, 414 Hourly, 48,000 Monthly, 24,000 Quarterly, 359 Weekly, and 23,000 Yearly. Figure~\ref{fig:m4} shows some examples selected freely from M4. In the original M4 dataset, the values have been scaled to prevent numbers smaller than 10, and the forecast horizon for each series has been configured in the testing set: $h=48$ for the Hourly series, $h=14$ for Daily, $h=18$ for Monthly, $h=8$ for Quarterly, $h=13$ for Weekly, and $h=6$ for Yearly.

Table~\ref{tb:info} summarizes some information about the series in TSDL and M4. The spectral-frequency, denoted as $\hat{f}$, is found by maximizing the Power Spectral Density (PSD) of the training sequence. For a sequence of length $l$, its spectral-frequency $\hat{f}$, which ranges from 0 to $l/2$, describes how rapidly its pattern oscillates. The stationarity level (ranging from 0.1\% to 100\%) is calculated by the Augmented Dickey-Fuller (ADF) test. As shown in Table~\ref{tb:info}, most of the series in M4 is short ($l<10h$), low-frequency ($\hat{f}<5$) and non-stationary. Here, we calculate the stationarity level as $(1-p_\mathrm{value})\times100\%$ such that higher stationarity level means that the series is more likely to be stationary. Nevertheless, to reject the null hypothesis that the series is non-stationary at significance level 5\%, the stationarity level has to be higher than 95\%. So, as can be seen from Table~\ref{tb:info}, the overwhelming majority of M4, as well as TSDL, is the non-stationary series.
\begin{table}[h!]
\caption{Information about the time series used in our experiments.}\label{tb:info}\vspace{-0.2in}
\begin{center}
\renewcommand\tabcolsep{3pt}
\begin{tabular}{c|c|cc|cc|cc}
\toprule[2pt]
           &no.  &\multicolumn{2}{c}{length}  & \multicolumn{2}{|c}{spectral-} & \multicolumn{2}{|c}{stationarity} \\
           &of &\multicolumn{2}{c}{($l/h$)}  & \multicolumn{2}{|c}{frequency} & \multicolumn{2}{|c}{level(\%)} \\\cline{3-8}
Testbeds   &series  &mean   & std       &mean& std      & mean& std\\\hline
TSDL     &452       &85.01  & 233.45    &23.71& 56.74   & 67.58& 33.41\\
Hourly (M4)   &414       &17.78  & 2.66      &36.44& 8.98    & 74.00& 19.72\\
Daily (M4)    &4227       &168.38 & 125.46    &1.75& 0.93     & 34.62& 26.54\\
Weekly (M4)   &359        &78.61  &54.39      &4.93& 17.92    & 49.33& 36.64\\
Monthly (M4)   &48,000    &12.01  & 7.63      &4.83&10.21     & 46.40& 34.79\\
Quarterly (M4)  &24,000   &11.53  &6.39       &4.12& 8.63     & 28.24& 30.69\\
Yearly (M4)     &23,000   &5.22   &4.08       &1.60& 1.00     &16.66  & 27.61\\\toprule[2pt]
\end{tabular}\vspace{-0.15in}
\end{center}
\end{table}
\subsubsection{Evaluation Metrics}
To measure the forecasting accuracy of various methods, we first consider the Symmetric Mean Absolute Percentage Error (sMAPE)~\cite{Makri:1993:ijf} widely used in the community of time series forecasting. The formulate for calculating sMAPE is the following:
\begin{align*}
\mathrm{sMAPE} = \frac{2}{h}\sum_{i=1}^h\frac{|y_i-\hat{y}_i|}{|y_i|+|\hat{y}_i|}\times100\%,
\end{align*}
where $\hat{y}_i$ denotes an estimate to $y_i$. Since the estimate $\hat{y}_i$ itself appears in the denominator, sMAPE tends to place more penalty on underestimates (i.e., $|\hat{y}_i|<|y_i|$) and suppress the impact of overestimates (i.e., $|\hat{y}_i|>|y_i|$). For the sake of comprehensive evaluation, we also consider the Normalized Root Mean Square Error (NRMSE) commonly used in machine learning and signal processing, which is computed as follows:
\begin{align*}
\mathrm{NRMSE} = \frac{\sqrt{h\sum_{i=1}^h(y_i-\hat{y}_i)^2}}{\sum_{i=1}^h|y_i|}\times100\%.
\end{align*}
Unlike sMAPE, which expresses sympathy for the methods that produce large overestimation errors occasionally, NRMSE is cruel in treating large overestimates. For example, let $[10; 1]$ be an estimate to $[1; 1]$. Then the error rate is 636.39\% in the light of NRMSE, but only 81.81\% in terms of sMAPE. When the error rate is very small, sMAPE can coincide with NRMSE , e.g., $\mathrm{sMAPE}=0$ implies $\mathrm{NRMSE}=0$ and vice versa. But, in general cases, these two metrics may give different assessments.
\subsubsection{Parametric Settings}\label{sec:modelselection} LbCNNM has two models,~\eqref{eq:lbcnnm:exact} and~\eqref{eq:lbcnnm:noisy}, but the exact model in~\eqref{eq:lbcnnm:exact} is never used in practice. For the ease of implementation, we shall not try to solve the problem in~\eqref{eq:lbcnnm:noisy} directly, but instead consider an equivalent problem as follows:
\begin{align}\label{eq:lbcnnm:noisy2}
\min_{\mathbf{x}}\|\mathcal{A}_k(A\mathbf{x})\|_*+\frac{\lambda{}k}{2}\|\mathcal{P}_{\Omega}(\mathbf{x}-\hat{\mathbf{y}})\|_2^2,
\end{align}
where $\lambda>0$ is a parameter. The above problem can be solved by the standard ADMM algorithm. As the nuclear norm is already good at handling heavy-tailed data, there is no need to tune $\lambda$ and thus we consistently use $\lambda=1000$ in all experiments. The kernel size $k=\beta{}q$ is not involved in the training procedure and we consistently use $k=0.5q$ during testing. In fact, under the background of point forecasting, it seems unavailing to try finding another configuration better than $k=0.5q$.

Under the standard setup that one is asked to predict the future part of a given sequence, as we have pointed out in Section~\ref{sec:gematrix:pcp}, the model size $m$ is an unknown parameter waiting to be determined. In the ideal case where the training sequence, $\tilde{\mathbf{y}}\in\mathbb{R}^l$, is very very long (e.g., $l\rightarrow{}+\infty$), one just needs to set $m$ to be large enough, e.g., $m=10h$. Unfortunately, the series in reality always has a finite length and is often very short, e.g., more than half of the series in M4 is shorter than $10h$. In this case, $m$ becomes a very influential hyper-parameter and needs be estimated carefully. First of all, for the generation matrix $G_0\in\mathbb{R}^{m\times{}n_0}$ ($n_0=l-m+1$) to be low-rank, $m$ cannot be too small. On the other hand, for the learnt transform matrix $A$ to generalize well (see the discussions around~\eqref{eq:generalization}), the following quantity, called \emph{sample-to-dimension ratio} (SDR) and defined as
\begin{align*}
\mathrm{SDR} = \frac{n_0}{m}=\frac{l-m+1}{m},
\end{align*}
should be large enough, which requires $m$ to be sufficiently small. Such a contradiction reveals that estimating $m$ is in fact a challenging \emph{model selection} problem, and there is no magic to ascertain the best choice in general cases. In this work, we shall adopt some heuristic approaches to estimate $m$. Concretely, for each series, we select $m$ from the range between $2h$ to $10h$ via \emph{regularized empirical risk minimization}:
\begin{align*}
\min_{2h\leq{}m\leq{}10h} \mathrm{EGE} + \gamma\times\mathrm{SpEnt}(G_0), \textrm{ s.t. } \mathrm{SDR}\geq{}\tau,
\end{align*}
where $\mathrm{EGE}$ is the empirical generalization error of LbCNNM obtained from $b$-fold cross validations ($b=7.5 + 2.5\tanh(l/h - 10)$), $\mathrm{SpEnt}$ is the spectral entropy defined in~\eqref{eq:ent}, and $\gamma$ and $\tau$ are two parameters. The regularization parameter $\gamma{}$ is set as
\begin{align*}
&\gamma = \left\{\begin{array}{cc}
0.4, & \textrm{if } \hat{f}> 5,\\
0, &\textrm{otherwise,}
\end{array}\right.
\end{align*}
where $\hat{f}$ is the spectral-frequency of the training sequence $\tilde{\mathbf{y}}$. The parameter $\tau$, which gives the upper bound of the model size $m$, is related to the sequence length $l/h$, as well as the spectral-frequency $\hat{f}$. According to some empirical analysis, we set it as follows:
\begin{align*}
&\hspace{-0.03in}\tau\hspace{-0.03in}=\hspace{-0.03in} \left\{\begin{array}{ll}
\hspace{-0.1in}4+\tanh(l/h-25) + \tanh(5-\hat{f}), & \hspace{-0.05in}\textrm{if } l> 13h,\\
\hspace{-0.1in}l/(h(5.5+\tanh(l/h-8.5)\\
\hspace{-0.1in}+0.5\tanh(\hat{f} - 4 - \tanh(l/h - 8.5)) &\\
\hspace{-0.1in}+ 0.3\tanh(4 + \tanh(l/h - 8.5) - \hat{f}))), & \hspace{-0.05in}\textrm{if } 5h<l\leq13h,\\
\hspace{-0.1in}0.7 + 0.05\tanh(l/h - 3.8) - 0.15(1 +\\
\hspace{-0.1in} \tanh(3.3 - l/h))+(0.2 + 0.05\tanh & \\
\hspace{-0.1in}(l/h - 3.8))\tanh(2.5 - \hat{f}), & \hspace{-0.05in}\textrm{if } l\leq5h.
\end{array}\right.
\end{align*}
Since $\tau$ is at most 6, the SDR constraint has no function whenever the sequence is long enough, namely $l>70h$. The above choices are found by first partitioning 10,000 series---selected randomly from M4---into different groups according to the length $l/h$ and the spectral-frequency $\hat{f}$, then finding manually the optimal parametric value for each group, and finally fitting those manually determined values into few smooth functions to avoid over-fitting.

During the process of estimating the model size $m$, we wound suggest not to test the generalization performance of Algorithm~\ref{alg:learnA}, but instead to consider only Algorithm~\ref{alg:learnA:pca} (with $Y=G_0$), which is much more computationally efficient. In fact, this is not just for the sake of efficiency, but also for relieving the issue of over-fitting. To further reduce the computation budgets, the kernel size is set as $k=q$, because in this case the optimization procedure of program~\eqref{eq:lbcnnm:noisy2} can be significantly sped up by Fast Fourier Transform (FFT)~\cite{fft:1998}. There is no loss either to use $k=q$ during the stage of estimating $m$, as it seems superfluous to include the kernel size $k$ into the training procedure.
\subsubsection{Data Argumentation and Model Combination}\label{sec:argumentation}
Data argumentation is a widely used strategy for improving the performance of learning-based forecasting methods~\cite{Wen:2020:arxiv}. For the proposed LbCNNM, there is a very intuitive reason for why data argumentation is helpful. As we have pointed out in Section~\ref{sec:learnA:pipe}, learning successfully the transformation $A$---such that the training samples are transformed to be convolutionally low-rank or approximately so---does not necessarily lead to accurate forecasts. This is because, especially when the training sequence is short, there is no guarantee that the target $\mathbf{y}$, which is unknown, is represented in a convolutionally low-rank manner.

To improve the generalization performance of LbCNNM, we would like to suggest a heuristic approach, the motivation behind which is rather straightforward: Just imagine what if the ground truth of the target $\mathbf{y}$ has been included as a training sample for learning $A$. If so, $\mathbf{y}$ will be contained by the generalizable space $\mathcal{S}_t(Y)$ with $t=1$, and thus LbCNNM is very likely to provide accurate forecasts. Although $\mathbf{y}$ is unknown, we can use the existing forecasting methods to estimate it. Hereafter, an estimate to the target $\mathbf{y}$ is also referred to as a \emph{pseudo-sample}. There are a great many avenues for generating pseudo-samples, as at least tens of thousands of forecasting methods have been established in the literature. As an example, in this work we shall consider only some simple methods, including Average, Drift, LSR, CNNM and ExpS (single).

Regarding Average, we take only one pseudo-sample, denoted as $\hat{\mathbf{y}}^{\mathrm{avg}}=[\hat{y}_1^{\mathrm{avg}},\cdots,\hat{y}_m^{\mathrm{avg}}]\in\mathbb{R}^m$, that estimates the target $\mathbf{y}$ by fitting a horizontal line into the last $h$ observations:
\begin{align*}
\hat{y}_i^{\mathrm{avg}} = \frac{\sum_{j=l-h+1}^l\tilde{y}_j}{h}, i=1,\cdots,m,
\end{align*}
where $\tilde{y}_j$ is the $j$th observation of the training sequence $\tilde{\mathbf{y}}$. As for Drift and LSR, we set a window of size $w_s=(2.75+0.25\tanh(10(l/h - 5.5)))h$ for them to generate multiple pseudo-samples. The pseudo-samples from Drift and LSR are all straight lines and can be described by a unified formula as follows:
\begin{align*}
\hat{y}_i^{\mathrm{drift}} \textrm{ or } \hat{y}_i^{\mathrm{LSR}}= \alpha_1(l + h - m + i) + \alpha_2, i=1,\cdots,m,
\end{align*}
where $\alpha_1$ and $\alpha_2$ are the slope and intercept of a line, respectively. As regards Drift, which draws a line between the last and $j$th observations ($l-w_s+1\leq{}j\leq{}l-1$), there are $w_s-1$ estimates to $(\alpha_1, \alpha_2)$, resulting in $w_s-1$ pseudo-samples. The LSR method, which fits a line into the last $j$ observations ($3\leq{}j\leq{}w_s$), generates $w_s-2$ pseudo-samples. So, Drift and LSR provide in total $2w_s-1$ pseudo-samples. But we use only those potentially helpful, namely we discard the pseudo-samples whose fitting errors with respect to the last $h$ observations are greater than a threshold defined as follows:
\begin{align*}
&e_{\mathrm{th}}=\min(2e_{\mathrm{max}}-e_0, e_0) \textrm{ with } e_{\mathrm{max}} = 0.325+\\
&0.025\tanh(10(3-l/h))+ 0.05\tanh(f_{\mathrm{th}}-\hat{f})\textrm{ and }\\
&f_{\mathrm{th}} =3.75+1.25\tanh(l/h-5)-2.5 + 2.5\tanh(16-l/h),
\end{align*}
where $e_0=\max(\mathrm{Err}(\hat{\mathbf{y}}^{\mathrm{avg}}), \mathrm{Err}(\hat{\mathbf{y}}_h^{\mathrm{drift}}))$, $\mathrm{Err}(\hat{\mathbf{y}}^{\mathrm{avg}})$ is the fitting error of $\hat{\mathbf{y}}^{\mathrm{avg}}$ with respect to the last $h$ observations---$\mathrm{Err}(\hat{\mathbf{y}}^{\mathrm{avg}})$ is indeed a good estimate to the generalization error of $\hat{\mathbf{y}}^{\mathrm{avg}}$, and $\mathrm{Err}(\hat{\mathbf{y}}_h^{\mathrm{drift}})$ is the fitting error (with respect to the last $h$ observations) of the pseudo-sample generated by drawing a line between the last and $(l-h+1)$th observations. Putting all the remaining pseudo-samples together, we obtain for training a new matrix, denoted as $G_s$, of dimension $m\times{}n_s$ ($1\leq{}n_s\leq{}2w_s$). It is easy to see that the rank of this matrix is bounded from above by 2; that is, $\rank{G_s}\leq2$.

Although arguably fails in many cases, CNNM is still useful and, in fact, might have a unique value while facing short, high-frequency series---such series is rare in M4 though. Let $\hat{\mathbf{y}}^{\mathrm{cnnm}}\in\mathbb{R}^{m}$ be CNNM's estimate to $\mathbf{y}$, then we generate $m$ pseudo-samples via
\begin{align*}
T^{i-1}\hat{\mathbf{y}}^{\mathrm{cnnm}}, i = 1,\cdots,m,
\end{align*}
where $T$ is a permutation matrix defined as in~\eqref{eq:convmtx}. Again, we use only the pseudo-samples whose fitting errors with respect to the last $h$ observations are smaller than a certain threshold. Empirically, we find that a good error threshold for choosing the pseudo-samples from CNNM is simply $e_0=\max(\mathrm{Err}(\hat{\mathbf{y}}^{\mathrm{avg}}), \mathrm{Err}(\hat{\mathbf{y}}_h^{\mathrm{drift}}))$. In this way, we obtain another matrix, denoted as $G_c\in\mathbb{R}^{m\times{}n_c}$ ($1\leq{}n_c\leq{}m$). Unlike the pseudo-samples in $G_s$, which are all straight lines, $\hat{\mathbf{y}}^{\mathrm{cnnm}}$ is often a nonlinear curve, and thus $G_c$ may not be strictly low-rank. Fortunately, due to the nature of CNNM, $G_c$ often meets the property of approximate low-rankness. While using $Y=G_c$ to learn the transformation $A$, LbCNNM mostly falls back to CNNM.

We also consider ExpS, which is one of the most widely used methods in time series forecasting. This method depends on a key parameter, denoted as $\alpha_{\mathrm{exps}}$, that balances the weights between the smoothed statistics and the original observations. Once $\alpha_{\mathrm{exps}}$ is determined, ExpS gives a unique estimate to the target $\mathbf{y}$. Thus, we can generate multiple pseudo-samples by a collection of parameters configured as follows:
\begin{align*}
&\alpha_{\mathrm{exps}}=\left\{\begin{array}{ll}
0.05, & \textrm{if  } \hat{f} > 10,\\
\{0.05, 0.1\}, & \textrm{if  } 5<\hat{f} \leq 10,\\
\{0.5, 0.55, \cdots, 1\}, & \textrm{if  } 2.5 < \hat{f} \leq5,\\
\{0.7, 0.75, \cdots, 1\}, & \textrm{if  } 1.25 < \hat{f} \leq2.5,\\
\{0.9, 0.95, 1\}, &\textrm{if  } \hat{f} \leq1.25.\\
\end{array}\right.
\end{align*}
The data matrix formed by the pseudo-samples from ExpS is denoted as $G_e\in\mathbb{R}^{m\times{}n_e}$, where $1\leq{}n_e\leq11$. Since the estimates provided by ExpS are nonlinear curves, $G_e$ is often not strictly low-rank. But it is unnecessary to be too worried about this, as the pseudo-samples in $G_e$ are very few (at most 11) and, even more, the rich redundancy among the pseudo-samples generated by a collection of adjacent parameters tends to produce low-rankness. While using $Y=G_e$ to learn $A$, LbCNNM is almost identical to ExpS.

Now, we have in hand four matrices, $G_0$, $G_c$, $G_s$ and $G_e$, each of which has certain strengths and weaknesses. It remains to combine them together. A simple approach for combining is the concatenation, namely $Y=[G_0,G_c,G_s,G_e]\in\mathbb{R}^{m\times{}n}$ ($n=n_0+n_c+n_s+n_e$). In fact, due to the power of LbCNNM, concatenation performs pretty well when combining $G_0$ with $G_s$; this is probably because $\rank{G_s}\leq2$. However, the process is not so smooth when combining with $G_c$ and $G_e$, which are not strictly low-rank. In particular, since CNNM's overall performance is very poor, it is possibly harmful to use $G_c$. Based on some priors, we design a combination, denoted as $G_0\oplus{}G_c\oplus{}G_s\oplus{}G_e$, as follows:
\begin{align*}
&G_0\oplus{}G_c\oplus{}G_s\oplus{}G_e=\\
&\left\{\begin{array}{ll}
[G_0,G_s, G_e], & \textrm{if  } \mathrm{SpGini}(G_0) > g_{\mathrm{th}},\\\relax
[G_0, G_c, G_s, G_e], & \textrm{otherwise, }
\end{array}\right.
\end{align*}
where $g_{\mathrm{th}}$ is a threshold parameter set as
\begin{align*}
&g_{\mathrm{th}} = 0.8 + 0.05\tanh(12 - l/h) + 0.1\tanh(5(4 - l/h)) \\
&+ (0.1 + 0.1\tanh(l/h - 12))\tanh(f_{\mathrm{th}}  - \hat{f}) \textrm{ with }\\
&f_{\mathrm{th}} = 6.25 + 1.25\tanh(l/h - 4) + 2.5\tanh(l/h - 12),
\end{align*}
Intuitively, the above model selection criterion is simply to discard $G_c$ whenever $\mathrm{SpGini}(G_0)$ is larger than certain threshold, i.e., $G_0$ is not far from being low-rank.

Besides the simple methods used in this work, some more advanced forecasting methods developed in the literature should be also applicable for data argumentation. In that sense, the proposed LbCNNM can be used as a general tool for model combination, leaving much space for further improvement.
\begin{table}[h!]
\caption{Ablation studies on M4, using (averaged) sMAPE as the evaluation metric.}\label{tb:as:datamodel}\vspace{-0.15in}
\begin{center}
\renewcommand\tabcolsep{1pt}
\begin{tabular}{l|c|c|c}\toprule[2pt]
Data                   &Learning                 &45,352 series  &all 100,000\\
Model                  &Method                  &($l>10h$) & series \\ \hline
$Y=G_0$                     & Algorithm~\ref{alg:learnA:pca} &10.52             &18.39       \\\hline
$Y=G_0$                     & Algorithm~\ref{alg:learnA}     &9.63              &16.76       \\
$Y=G_s$                     & Algorithm~\ref{alg:learnA}     &10.58             &15.30       \\
$Y=G_c$                     & Algorithm~\ref{alg:learnA}     &12.57             &20.08       \\
$Y=G_e$                     & Algorithm~\ref{alg:learnA}     &9.02              &14.08       \\\hline
$Y=[G_0,G_s]$               & Algorithm~\ref{alg:learnA}     &9.17              &14.17       \\
$Y=[G_0,G_c]$               & Algorithm~\ref{alg:learnA}     &9.61              &16.03       \\
$Y =[G_0,G_e]$              & Algorithm~\ref{alg:learnA}     &9.04              &14.17      \\\hline
$Y=[G_0,G_c,G_s]$           & Algorithm~\ref{alg:learnA}     &9.11              &13.84       \\
$Y=[G_0,G_c,G_e]$           & Algorithm~\ref{alg:learnA}     &9.09              &14.21       \\
$Y=[G_0,G_s,G_e]$           & Algorithm~\ref{alg:learnA}     &8.73              &13.36      \\\hline
$Y=[G_0,G_c, G_s,G_e]$      & Algorithm~\ref{alg:learnA}     &8.72              &13.29       \\
$Y=G_0\oplus{}G_c\oplus{}G_s\oplus{}G_e$ & Algorithm~\ref{alg:learnA}    &\textbf{8.69}         &\textbf{13.24} \\\bottomrule[2pt]
\end{tabular}\vspace{-0.1in}
\end{center}
\end{table}
\subsection{Experimental Results}
\subsubsection{Ablation Studies}
Table~\ref{tb:as:datamodel} shows the performance of LbCNNM under various data models and different learning methods. The first find from M4 is that PCP (Algorithm~\ref{alg:learnA}) is distinctly better than PCA (Algorithm~\ref{alg:learnA:pca}), revealing the importance of modeling the dynamics possibly existing in time series. A somewhat ``weird'' phenomenon is that, over all 100,000 series, using alone the linear pseudo-samples in $G_s$ is even better than using the generation matrix $G_0$ extracted from the training sequence. This is partially due to the fact that the majority of M4 is the short series, as confirmed by that $Y=G_0$ performs better than $Y=G_s$ on 45,352 relatively long series. Notably, as can be seen from Table~\ref{tb:as:datamodel}, the concatenation of multiple data models is often better than the best of them, demonstrating the power of LbCNNM in model combination.

The specifically designed combination, $G_0\oplus{}G_c\oplus{}G_s\oplus{}G_e$, is slightly better than the simple concatenation, $[G_0,G_c, G_s,G_e]$. The messages from this phenomenon are two-fold. First, the better performance of $Y=G_0\oplus{}G_c\oplus{}G_s\oplus{}G_e$ over $Y=[G_0,G_c, G_s,G_e]$ suggests that it might be inaccurate to choose randomly some candidates from the existing methods and combine them together by LbCNNM. For model combination to make improvement, it is indeed necessary that the individual models do have complementary advantages. On the other hand, the difference is very mild, actually, illustrates that LbCNNM is quite intelligent in distinguishing the useful information from the harmful one---note that the overall performance of using $Y=G_c$ alone is very poor.

While replacing the $\ell_1$ loss in~\eqref{eq:learnB} by $\ell_2$---in this case the problem has a closed-form solution, LbCNNM with $Y=G_0\oplus{}G_c\oplus{}G_s\oplus{}G_e$ achieves an average sMAPE of 13.33\%, which is close to the 13.24\% produced by $\ell_1$ loss. This is due to the fact that the sparsity of $BY$ is mainly derived from the sparse matrix $E$ rather than the $\ell_1$ loss function.
\subsubsection{Comparison Results}
Besides the methods that have been used for data argumentation in Section~\ref{sec:argumentation}, we also consider for comparison the classic Auto-Regressive Moving Average (ARMA), as well as the popular Recurrent Neural Network (RNN). The results are shown in Table~\ref{tb:tsdl}, Table~\ref{tb:m4:category} and Table~\ref{tb:m4:cn}. According to NRMSE, LbCNNM surpasses the competing methods on all testbeds except the Daily category, on which all methods seem able to provide very accurate forecasts. In the light of sMAPE, LbCNNM's overall performance on M4 is 3.9\% better than the Holt Benchmark that is the 42th best among thousands of results submitted to M4 Competition~\cite{M4:ijf:2020}. In terms of NRMSE, LbCNNM outperforms by 11.6\% the Naive Benchmark, which overwhelmingly beats RNN by 32.8\% (according to sMAPE) and achieves 45th in M4 Competition. These results confirm the effectiveness of LbCNNM. What is more, it is entirely possible to further reduce the error rates by following the ideas introduced in Section~\ref{sec:argumentation}.

Remarkably, while comparing to the competing methods---except CNNM---in terms of NRMSE, LbCNNM outperforms largely the most close baseline by 57.9\% on the Hourly category. This, indeed, is not accidental, as similar results also appear on TSDL, which shares similar characteristics with Hourly (see Table~\ref{tb:info}). So, we would recommend LbCNNM for high-frequency series. Moreover, as we can see from Table~\ref{tb:m4:cn}, the performance of all methods declines as the stationarity level goes up. This is not strange, as the most stationary series is probably the white noise series, which is apparently unpredictable.
\begin{table}[h!]
\caption{Comparison results on TSDL (542 series).}\label{tb:tsdl}\vspace{-0.2in}
\begin{center}
\begin{tabular}{c|cc|cc}
\toprule[2pt]
            & \multicolumn{2}{c|}{Evaluation Metrics} &\multicolumn{2}{c}{Computational Time (secs)} \\\cline{2-5}
Methods     &sMAPE & NRMSE &  Training & Testing\\\hline
Naive       &29.46 & 39.42&0 & $1.19\times{}10^{-6}$\\
Average     &27.50 & 39.08&0 & $1.14\times{}10^{-6}$\\
Drift       &37.77 & 56.72& $1.27\times10^{-5}$ & $3.79\times10^{-6}$\\
LSR         &38.22 & 59.45& $2.63\times10^{-5}$          &  $4.04\times10^{-6} $          \\
ExpS        &26.79 & 35.61 & $0.0051$ & $4.26\times10^{-6}$\\
ARMA(1,1)   &27.16 & 45.10&0.5312 &0.0374\\
ARMA(2,2)   &28.50 & 55.19&1.0656 & 0.0404\\
ARMA(3,3)   &29.83 & 67.46&1.6957 & 0.0441\\
RNN         &25.50 & 42.06&93.76  &0.0341\\
CNNM        &25.03 & 31.54 &0.0969 & 0.0181\\\hline
LbCNNM      &\textbf{22.34} &\textbf{26.97}&1.3966 &0.2140 \\\bottomrule[2pt]
\end{tabular}\vspace{-0.1in}
\end{center}
\end{table}
\begin{table*}
\caption{Comparison results on M4 (100,000 series).}\label{tb:m4:category}\vspace{-0.2in}
\begin{center}
\renewcommand\tabcolsep{5pt}
\begin{tabular}{c|c|c|c|c|c|c|cc}\toprule[2pt]
        &Hourly     &Daily      &Weekly      &Monthly     &Quarterly      &Yearly & \multicolumn{2}{|c}{overall}\\\cline{2-9}
Methods  &NRMSE     &NRMSE      &NRMSE      &NRMSE     &NRMSE      &NRMSE &sMAPE      &NRMSE\\\hline
Naive   &45.94      &\textbf{3.76}          &11.07      &19.00       &13.89      &18.71      &14.21      &17.15\\
Average &37.09      &5.00       &13.44      &19.89       &16.24      &25.13      &16.89      &19.64\\
Drift   &46.99      &4.57       &14.37      &24.35       &18.08      &20.74      &17.76      &21.24\\
LSR     &49.07      &5.22       &13.14      &21.32       &17.67      &21.95      &17.09      &19.99\\
ExpS    &44.71      &3.77       &11.11      &18.52       &13.74      &18.80      &14.02      &16.89\\
CNNM    &17.84      &5.83       &13.02      &20.39       &18.02      &29.76      &19.77      &21.32\\\hline
LbCNNM  &\textbf{15.61} &4.62 &\textbf{10.55} &\textbf{16.07}  &\textbf{13.33} &\textbf{17.14} &\textbf{13.24} &\textbf{15.15}\\\bottomrule[2pt]
\end{tabular}\vspace{-0.1in}
\end{center}
\end{table*}

\begin{table*}
\caption{Comparison under different levels of stationarity, in terms of NRMSE. }\label{tb:m4:cn}\vspace{-0.2in}
\begin{center}
\renewcommand\tabcolsep{5pt}
\begin{tabular}{cc|ccccccc}\toprule[2pt]
stationarity  &\# series    &Naive      &Average    &Drift      &LSR        &ExpS       &CNNM       &LbCNNM \\\hline
$95\sim100$   &7421         &39.34      &38.80      &52.18      &45.33      &37.90      &35.94      &\textbf{32.96}\\
$80\sim95$    &8056         &32.34      &33.91      &45.37      &40.94      &31.35      &32.15      &\textbf{28.16}\\
$60\sim80$    &11,610       &26.09      &28.58      &36.05      &33.61      &25.63      &26.87      &\textbf{22.58}\\
$40\sim60$    &13,651       &16.96      &18.70      &22.57      &21.05      &16.71      &19.59      &\textbf{15.35}\\
$20\sim40$    &11,891       &11.23      &12.75      &14.11      &13.91      &11.17      &14.42      &\textbf{11.15}\\
$0\sim20$     &47,371       &10.43      &13.99      &10.05      &10.33      &10.48      &18.06      &\textbf{9.27}\\\bottomrule[2pt]
\end{tabular}\vspace{-0.1in}
\end{center}
\end{table*}
\subsection{Forecasting with Incomplete Training Data}
Like CNNM, the LbCNNM program in~\eqref{eq:lbcnnm:noisy2} can also naturally cope with the cases where a portion of the historical observations are missing. However, the model selection techniques in Section~\ref{sec:modelselection} and the data argumentation skills in Section~\ref{sec:argumentation} need be updated to be consistent with the setup of incomplete training data. For simplicity, we consider here only a fixed model size, $m=5h$, and we simply use the generation matrix $G_0$ as the data for learning the transform matrix $A$. In this case, for the computational procedures of LbCNNM to adapt to the setup of incomplete training sequence, we just need to replace PCP by its variant, named Compressive PCP (CPCP)~\cite{cpcp:2012:isit}:
\begin{align}\label{eq:pcpmiss}
\min_{L,S}\|L\|_* + \lambda_{\mathrm{cpcp}}\|S\|_1, \quad\textrm{s.t.}\quad{}\mathcal{P}_{\tilde{\Omega}}(Y - L - S)=0,
\end{align}
where $\tilde{\Omega}\in[1:m]\times[1:n]$ is a 2D sampling set consisting of locations of the observed entries in $Y$ ($Y=G_0$), and $\lambda_{\mathrm{cpcp}}=1/\sqrt{\max(m,n)}$ as usual. Note that, once $L$ and $S$ have been found, we can replace the partial matrix $Y$ by the completion given by $L+S$, thereby the rest procedures in Algorithm~\ref{alg:learnA} remain the same.
\begin{figure}[h!]
\begin{center}
\includegraphics[width=0.48\textwidth]{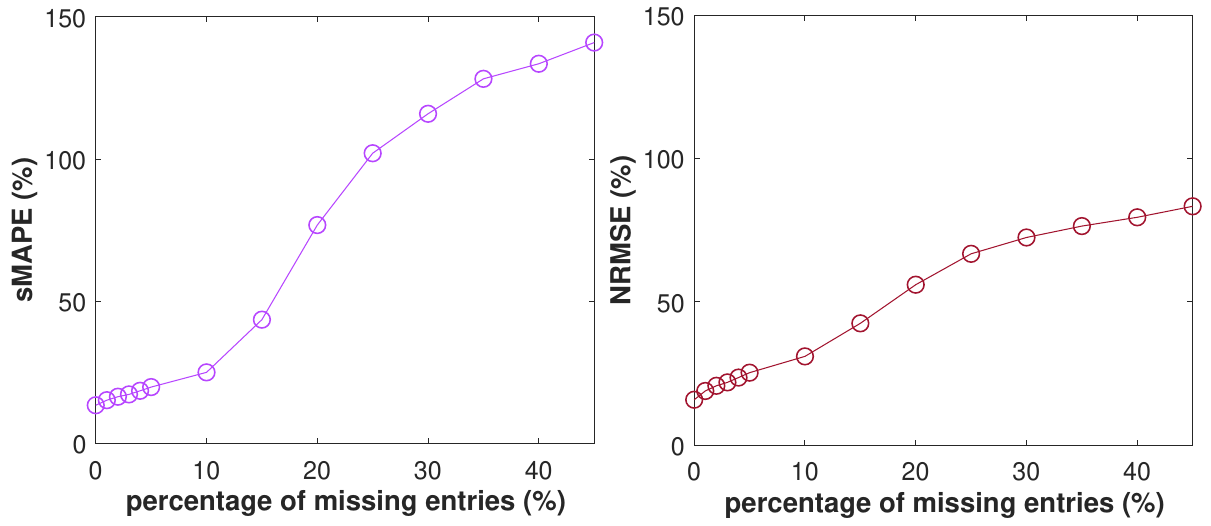}\vspace{-0.15in}
\caption{Prediction error rates of LbCNNM under different missing rates.}\label{fig:miss}\vspace{-0.2in}
\end{center}
\end{figure}

We use the 414 Hourly series in M4 as the experimental data. The locations of the missing entries are sampled randomly, and we run 10 trails for each series. Figure~\ref{fig:miss} plots the averaged error rate as a function of the percentage of missing entries. As one can see, the forecasting accuracy of LbCNNM monotonically declines as the missing rate increases, and drops very fast whenever the percentage of missing entries exceeds 10\%.\footnote{While calculating the prediction error, the observations missing at random are not included. Usually, the error rate given by NRMSE is larger than sMAPE. But Figure~\ref{fig:miss} shows that the sMAPE curve goes up much faster than NRMSE. This is because, in the situation of incomplete training data, LbCNNM tends to produce underestimates, whose error rates are amplified by sMAPE.} These results are not inconsistent with the well-known doctrine that a low-rank matrix can be restored from few, possibly corrupted observations. As aforementioned, the recovery of the low-rank component $L$ is not enough for LbCNNM to provide accurate forecasts, and the sparse component $S$ is also important. Unfortunately, it is hard, if not impossible, to exactly recover a sparse matrix from a subset of its entries.
\begin{table}[h!]
\caption{Results on 20 multivariate times series selected from TSDL.}\label{tb:tsdl:mul}\vspace{-0.25in}
\begin{center}
\begin{tabular}{c|cc|cc}
\toprule[2pt]
             &\multicolumn{2}{c|}{NRMSE} &\multicolumn{2}{c}{Computational Time (secs)} \\\cline{2-5}
  Methods     &mean & std &  Training & Testing\\\hline
LbCNNM(uni)   &21.82 & 32.66 &\textbf{1.66} &\textbf{0.53}\\
LbCNNM(mul)  &\textbf{17.74} & \textbf{22.17} & 4.62 & 3.53\\\bottomrule[2pt]
\end{tabular}\vspace{-0.3in}
\end{center}
\end{table}
\subsection{Multivariate Time Series Forecasting}
While dealing with a multivariate time series, we can implement LbCNNM in two different ways: One is to separate the multivariate series into multiple univariate series and apply LbCNNM individually; this method is referred to as LbCNNM(uni), the other is to reshape the multivariate series (which is a matrix) into a long univariate series (which is a vector) and deploy LbCNNM in a vector completion manner; this is referred to as LbCNNM(mul). To conduct LbCNNM(mul), we only need to update the construction procedure of the generation matrix. Namely, given a $d$-variate training sequence $\tilde{Y}\in\mathbb{R}^{l\times{}d}$ with length $l$, its generation matrix is constructed as follows:
\begin{align*}
&G_0 = [\mathrm{vec}(\tilde{Y}_1),\mathrm{vec}(\tilde{Y}_2),\cdots,\mathrm{vec}(\tilde{Y}_{n_0})]\in\mathbb{R}^{dm\times{}n_0},
\end{align*}
where $n_0=l - m + 1$, $\mathrm{vec}(\cdot)$ is the vectorization operator that reshapes a matrix into a vector, and $\tilde{Y}_i\in\mathbb{R}^{m\times{}d}$ is a submatrix of $\tilde{Y}$ obtained by selecting from $\tilde{Y}$ the rows with indices ranging from $i$ to $i + m -1$.

There are 38 multivariate times series in TSDL, and we use 20 series longer than $11h$ for experiments---so the training part is longer than $10h$. The transform matrix $A$ is learnt by Algorithm~\ref{alg:learnA}, using $Y=G_0$ as the input. We set $m=5h$ and $k=0.5q$, thereby $k=m$ for LbCNNM(uni) and $k=dm$ for LbCNNM(mul). The results presented in Table~\ref{tb:tsdl:mul} show that LbCNNM(mul) outperforms LbCNNM(uni) in terms of NRMSE. However, the computational cost of LbCNNM(mul) is much higher than LbCNNM(uni), especially during the testing stage.
\section{Conclusion}\label{sec:conclusion}
Vector completion with arbitrary sampling, which is to restore a vector from a subset set of its entries chosen arbitrarily, is of considerable significance, and it can generally include the important problem of time series forecasting as a special case. Provided that the target series one wishes to recover is convolutionally low-rank, the recently established CNNM~\cite{liu:arxiv:2019} is provably accurate. However, the required condition, convolutional low-rankness, may not be obeyed whenever the target series is brimful of trends and dynamics. To eliminate these limits, we proposed in this paper a learnable extension to CNNM, termed LbCNNM, that leverages a learnable, orthonormal transformation to assign the series not convolutionally low-rank with the nice property of convolutional low-rankness. Whenever the transformed signal is convolutionally low-rank (resp. approximately so), LbCNNM guarantees to recover the target in an exact (resp. approximate) manner. This drives us to look for adequate methods that can learn transform matrices that comply with training data. Based on PCP~\cite{Candes:2009:JournalACM}, we established a simple yet effective method for learning the desired transformation, and it is arguably that our learning method handles well the trends, seasonality and dynamics of time series. What is more, depending on how the training data is prepared, LbCNNM can also be used as a general tool for model combination, enabling the possibility of providing accurate forecasts on various time series. Finally, we verified the effectiveness of LbCNNM on 100,452 real-world time series from TSDL~\cite{TSDL:dataset} and M4~\cite{M4:ijf:2020}.
\section*{Acknowledgement}
This work is supported by New Generation AI Major Project of Ministry of Science and Technology of China under Grant 2018AAA0100601.
\bibliographystyle{IEEEtran}
\bibliography{lbref}
\begin{IEEEbiography}{Guangcan Liu}(M'11-SM'17) is currently a professor with the School of Automation, Southeast University, Nanjing, China. He received the bachelor's degree in mathematics and the Ph.D. degree in computer science and engineering from Shanghai Jiao Tong University, Shanghai, China, in 2004 and 2010, respectively. He was a Post-Doctoral Researcher with the National University of Singapore, Singapore, from 2011 to 2012, the University of Illinois at Urbana-Champaign, Champaign, IL, USA, from 2012 to 2013, Cornell University, Ithaca, NY, USA, from 2013 to 2014, and Rutgers University, Piscataway, NJ, USA, in 2014. He was a professor with the School of Automation, Nanjing University of Information Science and Technology, Nanjing, China, from 2014 to 2021. His research interests touch on the areas of machine learning, computer vision and signal processing. He is a senior member of the IEEE.
\end{IEEEbiography}

\end{document}